\documentclass[11pt]{article}
\usepackage[utf8x]{inputenc}
\usepackage[a4paper, left=3cm, right=3cm, top=3cm, bottom=3cm]{geometry}%
\usepackage{times}
\usepackage[hidelinks]{hyperref}       
\usepackage{url}            
\usepackage{booktabs}       
\usepackage{amsfonts}       
\usepackage{nicefrac}       
\usepackage{microtype}      
\usepackage[usenames,dvipsnames]{xcolor}         
\usepackage{tikz}

\usepackage{mathrsfs}

\usepackage{algorithmic}
\usepackage{algorithm}

\usepackage{amsmath,amsfonts,amssymb}
\usepackage{amsthm}

\usepackage{mathtools}
\usepackage{rotating}
\usepackage{subcaption}
\usepackage{soul}
\usepackage{natbib}
\bibliographystyle{chicago.bst}

\usepackage{float}

\renewcommand{\leq}{\leqslant}
\renewcommand{\geq}{\geqslant}

\newcommand{\eps}{\varepsilon}

\newcommand{\argmin}{\mathop{\mathrm{argmin}}}

\newcommand{\wh}{\widehat}
\newcommand{\pp}{\; : \; }

\newcommand{\bs}{\boldsymbol}

\newcommand{\R}{\mathbb R}

\newcommand{\cX}{\mathcal X}
\newcommand{\cY}{\mathcal Y}


\renewcommand{\d}{\mathrm{d}}

\newcommand{\otb}{\ensuremath{\mathtt{oob}}\xspace}
\newcommand{\itb}{\ensuremath{\mathtt{itb}}\xspace}

\usepackage{xspace}


\renewcommand{\P}{\mathbb P}

\newcommand{\probas}{\mathcal{P}}
\newcommand{\kl}{\mathrm{KL}} 


\newcommand{\dirichletdist}{\mathsf{Dir}}


\newcommand{\loss}{\ell}

\newcommand{\pred}{\widehat{y}}


\newcommand{\splits}{\Sigma} 
\newcommand{\node}{\mathbf{v}} 

\newcommand{\nodes}{\mathrm{nodes}} 
\newcommand{\inodes}{\mathrm{intnodes}} 
\newcommand{\leaves}{\mathrm{leaves}} 

\newcommand{\leaf}{\node}

\newcommand{\cell}{C}



\renewcommand{\root}{\mathtt{root}} 

\newcommand{\tree}{\mathcal{T}} 

\newcommand{\pathpoint}{\mathtt{path}} 
\newcommand{\wbar}{w^{\mathrm{den}}} 
\newcommand{\wnum}{w^{\mathrm{num}}} 

\newcommand{\wpred}{\widehat{w}}


\newcommand{\bestb}[1]{\textbf{#1}}
\newcommand{\bestf}[1]{\underline{#1}}

\renewcommand{\widehat}{\hat}

\newcommand{\ind}[1]{\mathbf 1_{#1}}

\newtheorem{theorem}{Theorem}
\newtheorem{lemma}{Lemma}
\newtheorem{corollary}{Corollary}

\usepackage{listings}
\usepackage{todonotes}

\title{WildWood: a new Random Forest algorithm}

\author{%
  St\'ephane Ga\"iffas\thanks{\texttt{stephane.gaiffas@lpsm.paris}} \\
  LPSM, Université Paris Cité \\
  DMA, \'Ecole normale sup\'erieure\\
  \and
  Ibrahim Merad\thanks{\texttt{imerad@lpsm.paris}}\\
  LPSM, Université Paris Cité
  \and
  Yiyang Yu\thanks{\texttt{yyu@lpsm.paris}}\\
  LPSM, Université Paris Cité
}

\begin{document}

\maketitle

\begin{abstract}
    We introduce WildWood (WW), a new ensemble algorithm for supervised learning of Random Forest (RF) type. 
    While standard RF algorithms use bootstrap out-of-bag samples to compute out-of-bag scores, WW uses these samples to produce improved predictions given by an aggregation of the predictions of all possible subtrees of each fully grown tree in the forest.
    This is achieved by aggregation with exponential weights computed over out-of-bag samples, that are computed exactly and very efficiently thanks to an algorithm called context tree weighting.
    This improvement, combined with a histogram strategy to accelerate split finding, makes WW fast and competitive compared with other well-established ensemble methods, such as standard RF and extreme gradient boosting algorithms.
\end{abstract}

\section{Introduction}
\label{sec:introduction}

This paper introduces WildWood (WW), a new ensemble method of Random Forest (RF) type~\citep{breiman2001randomforests}.
The main contributions of the paper and the main advantages of WW are as follows.
Firstly, we use out-of-bag samples (trees in a RF use different bootstrapped samples) very differently than what is done in standard RF~\citep{louppe2014understanding, biau2016rf_tour}. 
Indeed, WW uses these samples to compute an aggregation of the predictions of all possible subtrees of each tree in the forest, using aggregation with exponential weights~\citep{catoni2004statistical}.
This leads to much improved predictions: while only leaves contribute to the predictions of a tree in standard RF, the full tree structure contributes to predictions in WW.
An illustration of this effect is given in Figure~\ref{fig:aggregation_effect} on a toy binary classification example, where we can observe that subtrees aggregation leads to improved and regularized decision functions for each individual tree and for the forest.
\begin{figure}[htbp]
    \centering
    \includegraphics[width=0.9\textwidth]{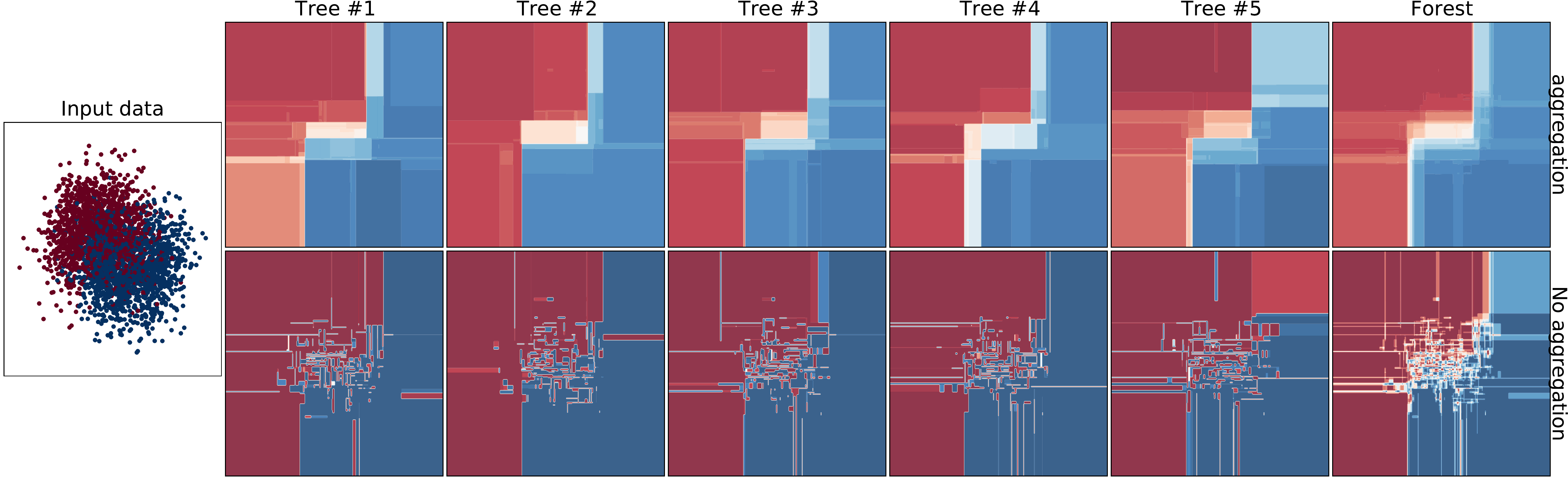}
    \caption{WW decision functions illustrated on a toy dataset (left) with subtrees aggregation (top) and without it (bottom). Subtrees aggregation improves trees predictions, as illustrated by smoother decision functions in the top compared with the bottom, improving overall predictions of the forest (last column).}
    \label{fig:aggregation_effect}
\end{figure}
We further illustrate in Figure~\ref{fig:n_trees} that each tree becomes a stronger learner, and that excellent performance can be achieved even when WW uses few trees, granting better interpretability of the solutions as a side benefit. 
Indeed, reducing the number of trees is a known way to obtain explainable models~\citep{zhou2018approximation,johansson2011one}.
A remarkable aspect of WW is that this improvement comes only at a small computational cost, thanks to a technique called ``context tree weighting'', used in lossless compression or online learning to aggregate all subtrees of a given tree~\citep{willems1995context-basic,willems1998context-extensions,helmbold1997pruning,catoni2004statistical,mourtada2019amf}.
Also, the predictions of WW do not rely on MCMC approximations required with Bayesian variants of RF~\citep{chipman1998bayesiancart,denison1998bayesiancart,chipman2010bart,taddy2011dynamictrees},  which is a clear distinction from such methods.
\begin{figure}[htbp]
    \centering
    \includegraphics[width=\textwidth]{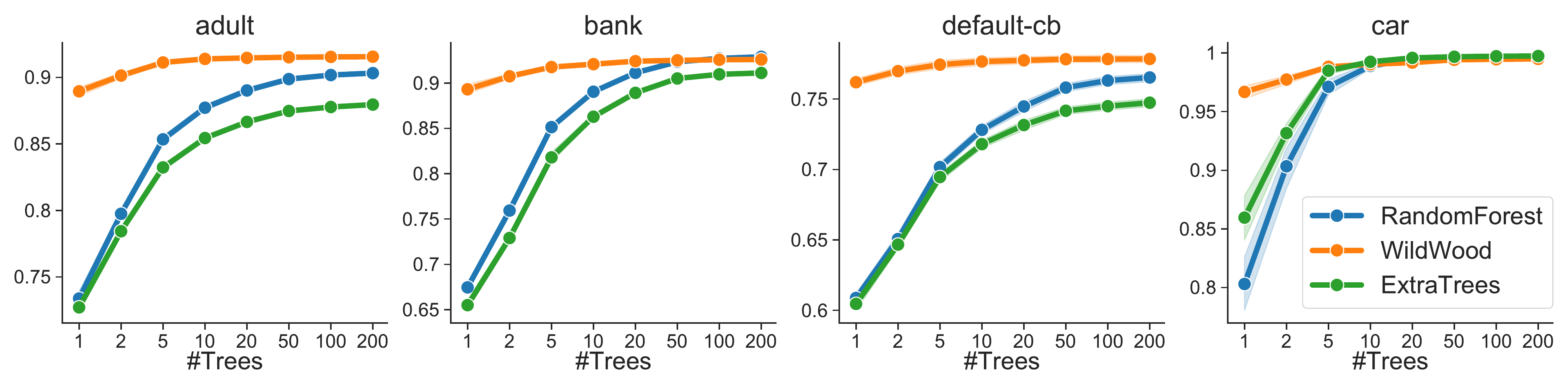}
    \caption{Mean test AUC and standard-deviations ($y$-axis) using 10 train/test splits for WW and \texttt{scikit-learn}'s implementations of RF~\citep{louppe2014understanding} and Extra Trees~\citep{geurts2006extremely}, using default hyperparameters, on several datasets. Thanks to subtrees aggregation, WW improves these baselines, even with few trees ($x$-axis is the number of trees).}
    \label{fig:n_trees}
\end{figure}
Secondly, WW uses feature binning (``histogram'' strategy), similarly to extreme gradient boosting (EGB) libraries such as XGBoost~\citep{xgboost_paper}, LightGBM~\citep{lightgbm_paper} and CatBoost~\citep{prokhorenkova2017catboost, dorogush2018catboost}.
This strategy helps accelerate computations in WW compared with standard RF algorithms, that typically require to sort features locally in nodes and try a larger number of splits~\citep{louppe2014understanding}.
This combination of subtrees aggregation and of the histogram strategy makes WW comparable with state-of-the-art implementations of EGB libraries, as illustrated in Figure~\ref{fig:auc_timings}.
\begin{figure}[htbp]
    \centering
    \includegraphics[width=\textwidth]{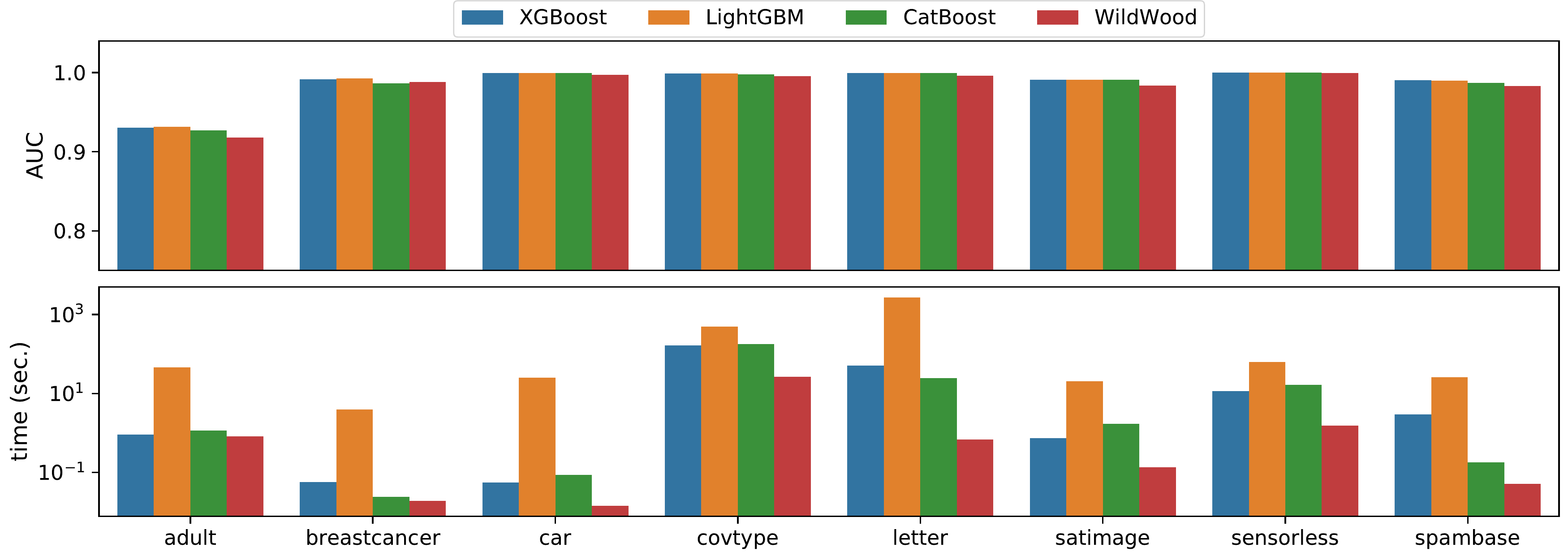}
    \caption{Test AUC (top) and training time (bottom) of WW compared with very popular EGB libraries (after hyperoptimization of all algorithms, see Section~\ref{sec:experiments} for details). 
    WW's performance, which uses only 10 trees in this display, is only slightly below such strong baselines, but is  faster (training times are on a logarithmic scale) on the considered datasets.}
    \label{fig:auc_timings}
\end{figure}
Moreover, WW supports optimal split finding for categorical features and missing values, with no need for particular pre-processing (such as one-hot encoding, see~\cite{xgboost_paper} or target encoding, see~\cite{prokhorenkova2017catboost,dorogush2018catboost}).
Finally, WW is supported by some theoretical evidence, since we prove that for a general loss function, the subtrees aggregation considered in WW leads indeed to a performance close to that of the best subtree.

\paragraph{Related works.}
Tree based decision models appeared as a supervised learning tool in the 1960s~\citep{morgan1963problems, messenger1972modal, quinlan1986induction}. Their use was promoted by the in-depth practical and theoretical study of~\cite{breiman1984cart}. The key idea of forming tree ensembles led to the well-known Random Forest algorithm~\citep{ho1995random, breiman2001randomforests} which became one of the most popular supervised learning algorithms thanks to its ease of use, robustness to hyperparameters~\citep{biau2016rf_tour,probst2019hyperparameters} and applicability to a wide range of domains. Recent examples include 
predictive medicine~\citep{subasi2017diagnosis,alghatani2021predicting}, intrusion detection~\citep{chen2021adasynrandom}, 
car safety~\citep{tavakoli2021driver}, differential privacy~\citep{6968348} and COVID-19~\citep{2020arXiv201101219S} to cite but a few. 

The RF algorithm sparked considerable interest and benefited from a rich literature of methodological developments~\citep{geurts2006extremely, criminisi2012decision, athey2018generalized}. In particular, RF was adapted for various tasks including the computation of prediction intervals~\citep{doi:10.1177/0962280219829885, doi:10.1080/00031305.2019.1585288, 10.1007/978-3-030-62362-3_16}, online learning~\citep{lakshminarayanan2014mondrianforests,mourtada2019amf} and quantile regression~\citep{meinshausen2006quantile}. A theoretical framework was progressively built around random forests to study their statistical properties such as consistency~\citep{biau2008consistency, biau2012analysis, scornet2015consistency_rf}, control over bias and variance~\citep{arlot2014purf_bias, genuer2012variance} and feature importance assessment~\citep{louppe2013understanding, kazemitabar2017variable}. These works often consider purely random trees (splits are not optimized using training data) which are more amenable to theoretical analysis. More recent works introduced ``honest'' trees which are built using disjoint samples from those used for prediction~\citep{athey2016recursive}. Honest forests were considered and proven consistent for causal inference tasks by~\cite{wager2018estimation}.

By essence, decision trees are universal approximators which lends them strong expressive power but also makes them prone to overfitting, hence the need for regularization. This was addressed in early works using pruning methods such as CCP, REP or MEP~\citep{quinlan1986induction, breslow1997simplifying}. Although they are fairly effective at reducing tree overfitting, these methods are mostly based on heuristics so that little is known about their theoretical properties. 
A form of soft-pruning was proposed by~\cite{buntine1992learning} and referred to as \emph{tree smoothing}. 
The latter efficiently computes predictions as approximate Bayesian posteriors over the set of possible prunings, however, the associated complexity is of the order of the tree-size, which makes the computation of predictions slow. An alternative form of regularization called ``shrinkage'' was proposed in~\citep{hastie1990shrinking, agarwal2022hierarchical} and consists in averaging predictions between leaf nodes and their ancestors. Although a parallel can be drawn between such methods and kernel Ridge regression, their theoretical support remains limited and does not allow to claim improved performance. Recent studies suggest that tree depth limitation and randomness of the tree building process bring an implicit regularization effect which improves performance on low signal-to-noise ratio datasets~\citep{mentch2020randomization, zhou2021trees}.

In~\citep{mourtada2019amf}, an improvement of Mondrian Forests~\citep{lakshminarayanan2014mondrianforests} is introduced for online learning, using subtrees aggregation with exponential weights, which is particularly convenient in the online learning setting.
However,~\cite{mourtada2019amf} considers only the online setting, with purely random trees, leading to poor performances compared with realistic decision trees.

Extreme boosting algorithms are another family of tree ensemble methods.
XGBoost~\citep{xgboost_paper} provides an extremely popular scalable tree boosting system which has been widely adopted in industry. 
LightGBM~\citep{lightgbm_paper} employed the ``histogram strategy'' for faster split finding, together with clever downsampling and features grouping algorithms in order to achieve high performance in reduced computation times.
CatBoost~\citep{probst2019hyperparameters} is another boosting library which pays particular attention to categorical features using target encoding, while addressing the potential bias issues associated to such an encoding.

In WW, we combine several ideas from the literature into a novel high performing RF algorithm.
We use a subtrees aggregation mechanism similar to~\citep{mourtada2019amf} for batch learning in a different way: we exploit the bootstrap, one of the key ingredients of RF, which provides in-the-bag and out-of-bag samples, to perform aggregation with exponential weights leading to considerably improved predictions. In addition, we employ the histogram strategy~\citep{lightgbm_paper} in order to grow decision trees much more efficiently. Finally, we implement an improved split finding method along categorical features which makes for better predictions.

\paragraph{Limitations.}
Our implementation of WW is still evolving and is not yet at the level of maturity of state-of-the-art EGB libraries such as~\cite{xgboost_paper,lightgbm_paper,probst2019hyperparameters}.
It does not outperform such strong baselines, but proposes an improvement of RF algorithms, and gives an interesting balance between performance and computational efficiency.

\section{WildWood: a new Random Forest algorithm}
\label{sec:wildwood}

We consider batch supervised learning, where data comes as a set of i.i.d training samples $(x_i, y_i)$ for $i=1, \ldots, n$ with vectors of numerical or categorical features $x_i \in \cX \subset \R^d$ and $y_i \in \cY$.
Our aim is to design a RF predictor $\widehat g(\: \cdot \: ; \boldsymbol \Pi) = \frac 1M \sum_{m=1}^M \widehat f(\: \cdot \: ; \Pi_m) : \cX \to \widehat \cY$
computed from training samples, where $\widehat \cY$ is the prediction space.
Such a RF computes the average of $M$ randomized trees predictions $\widehat f(\: \cdot \: ; \Pi_m)$ following the principle of bagging~\citep{breiman1996bagging, oza2001onlinebagging}, with $\boldsymbol \Pi = (\Pi_1, \ldots, \Pi_M)$ where $\Pi_1, \ldots, \Pi_M$ are i.i.d realizations of a random variable corresponding to bootstrap and feature subsampling (see Section~\ref{sub:tree} below).
Each tree is trained independently of each other, in parallel.
In what follows we describe only the construction of a single tree and omit from now on the dependence on $m=1, \ldots, M$.

\paragraph{Feature binning.}

The split finding strategy described in Section~\ref{sub:splits} below works on binned features. 
While this technique is of common practice in EGB libraries~\citep{xgboost_paper, lightgbm_paper,  prokhorenkova2017catboost}, we are not aware of an implementation of it for RF. Note that binning may affect predictive performance by making split thresholds less accurate. However, the experimental performance from EGB libraries (see also Section~\ref{sec:experiments}) suggests that this effect is minor.
The input $n \times d$ matrix $\bs X$ of features is transformed into another same-size matrix of ``binned'' features denoted $\bs X^{\text{bin}}$.
To each input feature $j=1, \ldots, d$ is associated a set $B_j = \{ 1, \ldots, b_j \}$ of bins, where $b_j \leq b_{\max}$ with $b_{\max}$ a hyperparameter corresponding to the maximum number of bins a feature can use (default is $b_{\max} = 256$ similarly to~\cite{lightgbm_paper}, so that a single byte can be used for entries of $\bs X^{\text{bin}}$).
When a feature is continuous, it is binned into $b_{\max}$ bins using inter-quantile intervals.
If it is categorical, each modality is mapped to a bin whenever $b_{\max}$ is larger than its number of modalities, otherwise sparsest modalities end up binned together.
If a feature $j$ contains missing values, its rightmost bin in $B_j$ is used to encode them 
(in such case, later in split, we do not loop only left to right (along bin order), but right to left as well, in order to compare splits that put missing values on the left or on the right).
After binning, each column satisfies $\bs X_{\bullet, j}^\text{bin} \in B_j^n$.

\subsection{Random decision trees}
\label{sub:tree}

Let $\cell = \prod_{j=1}^d B_j$ be the binned feature space.
A random decision tree is a pair $(\tree, \splits)$, where $\tree$ is a finite ordered binary tree and $\splits$ contains information about each node in $\tree$, such as split information.
The tree is random and its source of randomness $\Pi$ comes from the bootstrap and feature subsampling as explained below.

\paragraph{Finite ordered binary trees.} 

A finite ordered binary tree $\tree$ is represented as a finite subset of the set $\{ 0, 1 \}^* = \bigcup_{n \geq 0} \{ 0, 1 \}^n$ of all finite words on $\{ 0, 1\}$.
The set $\{ 0, 1\}^*$ is endowed with a tree structure (and called the complete binary tree): the empty word $\root$ is the root, and for any $\node \in \{ 0, 1\}^*$, the left (resp. right) child of $\node$ is $\node 0$ (resp. $\node 1$).
We denote by $\inodes (\tree) = \{ \node \in \tree : \node 0, \node 1 \in \tree \}$ the set of its interior nodes and by $\leaves (\tree) = \{ \node \in \tree : \node 0, \node 1 \not\in \tree \}$ the set of its leaves, both sets are disjoint and the set of all nodes is $\nodes(\tree) = \inodes (\tree) \cup \leaves (\tree)$.

\paragraph{Splits and cells.} 

The split $\sigma_\node = (j_\node, t_\node) \in \Sigma$ of each $\node \in \inodes (\tree)$ is characterized by its dimension $j_\node \in \{ 1, \dots, d \}$ and a non-empty subset of bins $t_\node \subsetneq \{ 1, \ldots, b_{j_\node} \}$.
We associate to each $\node \in \tree$ a cell $\cell_\node \subseteq C$ which is defined recursively: $C_\root = C$ and for each $\node \in \inodes(\tree)$ we define
\begin{equation*}
  \cell_{\node 0} := \{ x \in \cell_\node : x_{j_\node} \in t_{\node}  \} \quad \text{and} \quad \cell_{\node 1} := \cell_\node \setminus \cell_{\node 0}.
\end{equation*}
When $j_\node$ corresponds to a continuous feature, bins have a natural order so that $t_\node$ is encoded by a bin threshold  $s_{\node} \in B_{j_\node}$ as $t_\node = \{ 1, 2, \ldots, s_{\node} \}$; while for a categorical split, the set $t_\node$ may be any non-trivial subset of $B_{j_\node}$.
By construction, $(\cell_{\leaf})_{\leaf \in \leaves (\tree)}$ is a partition of $\cell$. 

\paragraph{Bootstrap and feature subsampling.}

Let $I = \{1, \ldots, n\}$ be the training samples indices.
The randomization $\Pi$ of the tree uses bootstrap: it samples uniformly at random, with replacement, elements of $I$ corresponding to in-the-bag (\itb) samples.
If we denote as $I_\itb$ the indices of unique \itb samples, we can define the indices of out-of-bag (\otb) samples as $I_\otb = I \setminus I_\itb$.
A standard argument shows that $\P[i \in I_\itb] = 1 - (1 - 1/n)^n \rightarrow 1 - e^{-1} \approx 0.632$ as $n \rightarrow +\infty$, known as the 0.632 rule~\citep{efron1997improvements}.
The randomization $\Pi$ uses also feature subsampling: each time we need to find a split, we do not try all the features $\{1, \ldots, d\}$ but only a subset of them of size $d_{\max}$, chosen uniformly at random.
This follows what standard RF algorithms do~\citep{breiman2001randomforests,biau2016rf_tour,louppe2014understanding}, with the default $d_{\max} = \sqrt{d}$.

\subsection{Split finding on histograms}
\label{sub:splits}

For $K$-class classification, when looking for a split for some node $\node$, we compute the node's ``histogram'' 
$\mathrm{hist}_\node[j, b, k] = \sum_{i \in I_\itb : x_i \in \cell_\node} \ind{x_{i, j} = b, y_i = k}$ for each sampled feature $j$, each bin $b$ and label class $k$ seen in the node's samples (actually weighted counts to handle bootstrapping and sample weights).
Of course, one has $\mathrm{hist}_\node = \mathrm{hist}_{\node 0} + \mathrm{hist}_{\node 1}$, so that we don't need to compute two histograms for siblings $\node 0$ and $\node 1$, but only a single one. 
Then, we loop over the set of non-constant (in the node) sampled features $\{ j : \# \{ b : \sum_{k} \mathrm{hist}_\node[j, b, k] \geq 1 \} \geq 2 \}$ and over the set of non-empty bins $\{ b : \sum_{k} \mathrm{hist}_\node[j, b, k] \geq 1 \}$ to find a split, by comparing standard impurity criteria computed on the histogram's statistics, such as gini or entropy for classification and variance for regression.

\paragraph{Bin order and categorical features.}

The order of the bins used in the loop depends on the type of the feature.
If it is continuous, we use the natural order of bins.
If it is categorical and the task is binary classification (labels in $\{0, 1\}$) we use the bin order that sorts $\mathrm{hist}_\node[j, b, 1] / \sum_{k=0, 1} \mathrm{hist}_\node[j, b, k]$ with respect to $b$, namely the proportion of labels $1$ in each bin. 
This allows to find the optimal split with complexity $O(b_j \log b_j)$, see Theorem~9.6 in~\citep{breiman1984cart}, the logarithm coming from the sorting operation, while there are $2^{b_j - 1} -1$ possible splits.
This trick is used by EGB libraries as well, using an order of $\text{gradient} / \text{hessian}$ statistics of the loss considered~\citep{xgboost_paper, lightgbm_paper, prokhorenkova2017catboost}. 
For $K$-class classification with $K > 2$, we consider two strategies: 
(1)~one-versus-rest, where we train $M K$ trees instead of $M$, each tree trained with a binary one-versus-rest label, so that trees can find optimal categorical splits and 
(2)~heuristic, where we train $M$ trees and where split finding uses $K$ loops over bin orders that sort $\mathrm{hist}_\node[j, b, k] / \sum_{k'} \mathrm{hist}_\node[j, b, k']$ (w.r.t $b$) for $k=0, \ldots, K-1$. 
The former strategy generally yields better predictions but at a higher computational cost, while the latter is more efficient but also more prone to overfitting.

\paragraph{Split requirements.}

Nodes must hold at least one $\itb$ and one $\otb$ sample to apply aggregation with exponential weights, see Section~\ref{sub:agg-ctw} below.
A split is discarded if it leads to children with less than $n_{\text{min-leaf}}$ \itb or \otb samples and we do not split a node with less than $n_{\text{min-split}}$ \itb or \otb samples.
These hyperparameters only weakly impact WW's performances and sticking to default values ($n_{\text{min-leaf}}=1$ and $n_{\text{min-split}}=2$, following \texttt{scikit-learn}'s~\citep{louppe2014understanding,pedregosa2011scikit-learn}) is usually enough (see Section~\ref{apd:hyperparameters-sensitivity} below).

\paragraph{Related works on categorical splits.}

In~\cite{PartitioningNominal}, an interesting characterization of an optimal categorical split for multiclass classification is introduced, but no efficient algorithm is, to the best of our understanding, available for it. 
A heuristic algorithm is proposed therein, but it requires to compute, for each split, the top principal component of the covariance matrix of the conditional distribution of labels given bins, which is computationally too demanding for an RF algorithm intended for large datasets.
Regularized target encoding is shown in~\cite{pargent2021regularized} to perform best when compared with many alternative categorical encoding methods.
Catboost~\citep{prokhorenkova2017catboost}
uses target encoding, which replaces feature modalities by label statistics, so that a natural bin order can be used for split finding.
To avoid overfitting on uninformative categorical features, a debiasing technique uses random permutations of samples and computes the target statistic of each element based only on its predecessors in the permutation. 
However, for multiclass classification, target encoding is influenced by the arbitrarily chosen ordinal encoding of the labels.
LightGBM~\citep{lightgbm_paper} uses a one-versus-rest strategy, which is also one of the approaches used in WW for categorical splits on multiclass tasks.
For categorical splits, where bin order depends on labels statistics, WW does not use debiasing as in~\cite{prokhorenkova2017catboost}, since aggregation with exponential weights computed on \otb samples allows to deal with overfitting.

\paragraph{Tree growth stopping.}

We do not split a node and make it a leaf if it contains less than $n_{\text{min-split}}$ \itb or \otb samples. 
The same applies when a node's impurity is not larger than a threshold $\eps$ ($\eps=0$ by default).
When only leaves or non-splittable nodes remain, the growth of the tree is stopped.
Trees grow in a depth-first fashion so that childs $\node 0$ and $\node 1$ have memory indexes larger than their parent $\node$ (as required by Algorithm~\ref{alg:wbar-computation} below). In practice, trees are grown to full depth in WildWood (i.e. we take $n_{\text{min-split}}=2$) since we can only benefit from additional splits and do not need to worry about overfitting since it is prevented by the subtrees aggregation we present now.

\subsection{Prediction function: aggregation with exponential weights}
\label{sub:agg-ctw}

Given a tree $\tree$ grown as described in Sections~\ref{sub:tree} and~\ref{sub:splits}, its prediction function is an aggregation of the predictions given by all possible subtrees rooted at $\root$, denoted $\{T : T \subset \tree \}$. Figure~\ref{fig:subtree_illustr} provides an example of a tree and two such subtrees. 
While $\tree$ is grown using \itb samples, we use \otb samples to perform aggregation with exponential weights, with a branching process prior over subtrees. Thus, the aggregation mechanism gives more importance to shallow subtrees with a good predictive \otb performance. The balance between shallowness and prediction quality is modulated through a temperature parameter denoted $\eta$ in Equation~\eqref{eq:exact-aggregation} below.
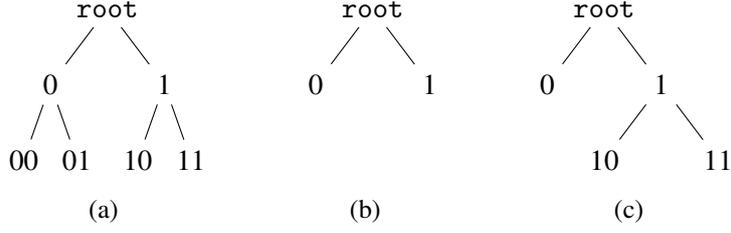
\begin{figure}
    \centering
    \subfloat[][]{\centering
    \begin{tikzpicture}[level distance=1cm]
        \node{$\mathtt{root}$}
        child {node {0} child { node[xshift=4mm] {00} } child { node[xshift=-4mm] {01}  }}
        child {node {1} child { node[xshift=4mm] {10} } child { node[xshift=-4mm] {11}  }};
    \end{tikzpicture}\label{subfig:a}
    }\hspace{1cm}%
    \subfloat[][]{
    \centering
    \begin{tikzpicture}[level distance=1cm]
        \node{$\mathtt{root}$}
        child {node {0} child[white]{node {\phantom{0}}}}
        child {node {1} };
    \end{tikzpicture}\label{subfig:b}
    }\hspace{1cm}%
    \subfloat[][]{
    \centering
        \begin{tikzpicture}[level distance=1cm]
            \node{$\mathtt{root}$}
            child {node {0} }
            child {node {1} child { node {10} } child { node {11}  }};
        \end{tikzpicture}\label{subfig:c}
    }
    \caption{Example of a simple tree $\tree$ show in~(\protect\subref{subfig:a}) and two subtrees $T_1, T_2 \subset \tree$ rooted at $\root$ shown in~(\protect\subref{subfig:b}) and~(\protect\subref{subfig:c}). }
    \label{fig:subtree_illustr}
\end{figure}
\paragraph{Node and subtree prediction.}

We define $\node_{T} (x) \in \leaves(T)$ as the leaf of $T$ containing $x \in \cell$.
The prediction of a node $\node \in \nodes(\tree)$ and of a subtree $T \subset \tree$ is given by
\begin{equation}
    \label{eq:node_subtree_prediction}
    \pred_{\node} = h ( (y_i)_{i \in I_\itb \pp x_i \in \cell_\node}) \quad \text{ and } \quad  \pred_{T} (x) = \pred_{\node_{T} (x)},
\end{equation}
where $h : \cup_{n \geq 0} \cY^n \to \widehat \cY$ is a generic ``forecaster'' used in each cell and where a subtree prediction is that of its leaf containing $x$.
A standard choice for regression ($\cY = \widehat \cY = \R$) is the empirical mean forecaster
\begin{equation}
    \label{eq:reg-predictor}
    \pred_{\node} = \frac{1}{n_{\node}} 
    \sum_{i \in I_\itb \pp x_i \in \cell_\node} y_i,
\end{equation}
where $n_{\node} = | \{i \in I_\itb \pp x_i \in \cell_\node \} |$.
For $K$-class classification with $\cY = \{ 1, \ldots, K \}$ and $\widehat \cY = \probas(\cY)$, the set of probability distributions over $\cY$, a standard choice is a Bayes predictive posterior with a prior on $\probas (\cY)$ equal to the Dirichlet distribution $\dirichletdist(\alpha, \dots, \alpha)$, namely the \emph{Jeffreys prior} on the multinomial model $\probas (\cY)$, which leads to
\begin{equation}
  \label{eq:kt-predictor}
  \pred_{\node} (k) = \frac{n_{\node} (k) + \alpha}{n_{\node} + \alpha K},
\end{equation}
for any $k \in \cY$, where $n_{\node} (k) = | \{ i \in I_\itb : x_i \in \cell_\node, y_i = k \} |$.
By default, WW uses $\alpha = 1/2$ (the \emph{Krichevsky-Trofimov} forecaster~\citep{tjalkens1993sequential}), but one can perfectly use any $\alpha > 0$, so that all the coordinates of $\pred_{\node}$ are positive.
This is motivated by the fact that WW uses as default the log loss to assess $\otb$ performance for classification, which requires an arbitrarily chosen clipping value for zero probabilities.
Different choices of $\alpha$ only weakly impact WW's performance, as illustrated in Appendix~\ref{apd:hyperparameters-sensitivity}.
We use \otb samples to define the cumulative losses of the predictions of all $T \subset \tree$
\begin{equation}
    \label{eq:subtree_loss}
    L_T =  \sum_{i \in I_\otb} \ell (\pred_{T} (x_i), y_i),
\end{equation}
where $\loss : \widehat \cY \times \cY \to \R^+$ is a loss function.
For regression problems, a default choice is the quadratic loss $\ell (\pred, y) = (\pred - y)^2$ while for multiclass classification, a default is the log-loss $\ell (\pred, y) = - \log \pred(y)$, where $\pred(y) \in (0, 1]$ when using~\eqref{eq:kt-predictor}, but  other loss choices are of course possible.

\paragraph{Prediction function.}

Let $x \in \cell$.
The prediction function $\widehat f$ of a tree $\tree$ in WW is given by
\begin{equation}
  \label{eq:exact-aggregation}
 \widehat f (x) = \frac{\sum_{T \subset \tree} \pi (T) e^{-\eta L_T} \pred_{T} (x)}{\sum_{T \subset \tree} \pi (T) e^{-\eta L_T}} \quad \text{with} \quad \pi(T) = 2^{- \| T \|},
\end{equation}
where the sum is over all subtrees $T$ of $\tree$ rooted at $\root$, where $\eta > 0$ is a temperature parameter and $\|T\|$ is the number of nodes in $T$ minus its number of leaves that are also leaves of $\tree$.
A default choice for the temperature is $\eta = 1$ for the log-loss (see in particular Corollary~\ref{cor:bound-best-pruning-log} in Section~\ref{sec:theory} below), but it can also be tuned through hyperoptimization, although we do not observe strong performance gains, see Appendix~\ref{apd:hyperparameters-sensitivity} below.

The prediction function~\eqref{eq:exact-aggregation} corresponds to an exponentially weighted average over the predictors $\wh y_T(x)$ for $T \subset \tree$ rooted at $\root$ based on \otb performance and the branching process prior $\pi$ with branching probability $1 / 2$. 
The latter being expressly chosen for its fitness within this paradigm. 
This definition falls within the framework of PAC-Bayesian theory~\citep{mcallester1998pacbayes, mcallester1999pacbayes, catoni2007pacbayes} with theoretical guarantee of the oracle inequality for $\widehat f$ on \otb (see Theorem~\ref{thm:subtree-bound} below). 
This aggregation procedure can be understood as a \emph{non-greedy way to prune trees}: the weights depend not only on the quality of one single split but also on the performance of each subsequent split.

Naively computing $\widehat f$ from Equation~\eqref{eq:exact-aggregation} is computationally and memory-wise infeasible for a large $\tree$, since it involves a sum over all $T \subset \tree$ rooted at $\root$ and requires one weight for each $T$.
Indeed, the number of subtrees of a minimal tree that separates $n$ points is exponential in the number of nodes, and hence \emph{exponential in $n$}.
However, it turns out that one can compute $\widehat f$ exactly and very efficiently thanks to the prior choice $\pi$ together with an adaptation of \emph{context tree weighting}~\citep{willems1995context-basic,willems1998context-extensions,helmbold1997pruning,catoni2004statistical}.
\begin{theorem}
    \label{thm:ctw}
    The prediction function~\eqref{eq:exact-aggregation} can be written as $\wh f(x) = \wh f_{\root}(x)$\textup, where $\wh f_{\root}(x)$ satisfies the recursion
    \begin{equation}
        \label{eq:f_pred_recursion}
        \wh f_\node(x) = \frac 12 \frac{w_{\node}}{\wbar_\node} \pred_{\node} + \Big(1 - \frac 12  \frac{w_{\node}}{\wbar_\node} \Big) \wh f_{\node a}(x)
    \end{equation}
    for $\node, \node a \in \pathpoint(x)$ \textup($a \in \{0, 1\}$\textup) the path in $\tree$ going from $\root$ to $\node_\tree(x)$\textup, where $w_\node := \exp(-\eta L_\node)$ with $L_\node := \sum_{i \in I_\otb : x_i \in C_\node} \ell (\pred_{\node}, y_i)$ and where $\wbar_\node$ are weights satisfying the recursion
    \begin{equation}
        \label{eq:wden-recursion}
        \wbar_{\node} =
        \begin{cases}
          w_{\node} & \text{ if } \node \in \leaves (\tree), \\
          \frac{1}{2} w_{\node} + \frac{1}{2} \wbar_{\node 0} \wbar_{\node 1} &\text{ otherwise}.
        \end{cases}
    \end{equation}
\end{theorem}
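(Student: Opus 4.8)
The plan is to recognize \eqref{eq:exact-aggregation} as a \emph{context tree weighting} identity: rewrite the sums over subtrees as products over leaves, localize them at each node of $\tree$, and read off the two recursions from how a subtree rooted at a node decomposes at its children.

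\textbf{Step 1 (from subtrees to leaf products).} By the construction of Section~\ref{sub:tree}, the cells $(\cell_\othernode)_{\othernode\in\leaves(T)}$ of the leaves of any subtree $T\subset\tree$ rooted at $\root$ partition $\cell$, hence the loss \eqref{eq:subtree_loss} decomposes as $L_T=\sum_{\othernode\in\leaves(T)}L_\othernode$ with $L_\othernode=\sum_{i\in I_\otb:\,x_i\in\cell_\othernode}\loss(\pred_\othernode,y_i)$ as in the theorem, so $e^{-\eta L_T}=\prod_{\othernode\in\leaves(T)}w_\othernode$ with $w_\othernode=e^{-\eta L_\othernode}$. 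Moreover $\pred_T(x)=\pred_{\node_T(x)}$, where $\node_T(x)\in\leaves(T)$ is the unique leaf of $T$ whose cell contains $x$. So the denominator of \eqref{eq:exact-aggregation} equals $\sum_T\pi(T)\prod_{\othernode\in\leaves(T)}w_\othernode$ (independent of $x$) and its numerator equals $\sum_T\pi(T)\big(\prod_{\othernode\in\leaves(T)}w_\othernode\big)\pred_{\node_T(x)}$, the sums over subtrees of $\tree$ rooted at $\root$.

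\textbf{Step 2 (localization).} For $\node\in\nodes(\tree)$ let $\mathcal{T}_\node$ be the set of subtrees of $\tree$ rooted at $\node$, and for $T\in\mathcal{T}_\node$ set $\|T\|_\node:=|\nodes(T)|-|\leaves(T)\cap\leaves(\tree)|$, so $\pi(T)=2^{-\|T\|_\root}$. Define, for $x\in\cell_\node$,
\begin{equation*}
  \wbar_\node:=\sum_{T\in\mathcal{T}_\node}2^{-\|T\|_\node}\prod_{\othernode\in\leaves(T)}w_\othernode
  \quad\text{and}\quad
  \wnum_\node(x):=\sum_{T\in\mathcal{T}_\node}2^{-\|T\|_\node}\Big(\prod_{\othernode\in\leaves(T)}w_\othernode\Big)\pred_{\node_T(x)},
\end{equation*}
where $\node_T(x)$ is again the leaf of $T$ whose cell contains $x$ (leaf cells of $T\in\mathcal{T}_\node$ partition $\cell_\node$ by the same construction). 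All these quantities are positive since $w_\othernode>0$, and by Step~1 the root values $\wbar_\root$ and $\wnum_\root(x)$ are exactly the denominator and numerator of \eqref{eq:exact-aggregation}. Thus it suffices to show $\wbar_\node$ obeys \eqref{eq:wden-recursion} and that $\wh f_\node(x):=\wnum_\node(x)/\wbar_\node$ obeys \eqref{eq:f_pred_recursion} along $\pathpoint(x)$, with $\wh f_{\node_\tree(x)}(x)=\pred_{\node_\tree(x)}$ at the leaf ending the path; then $\wh f(x)=\wh f_\root(x)$.

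\textbf{Step 3 (decomposition and the recursions).} The combinatorial core is the disjoint decomposition of $\mathcal{T}_\node$ for $\node\in\inodes(\tree)$: a subtree rooted at $\node$ is either the singleton $\{\node\}$, or has $\node$ as an interior node and is uniquely $\{\node\}\cup T_0\cup T_1$ with $T_a\in\mathcal{T}_{\node a}$. In the latter case $|\nodes(T)|=1+|\nodes(T_0)|+|\nodes(T_1)|$ and, since $\node\notin\leaves(\tree)$, the leaves of $T$ lying in $\leaves(\tree)$ split disjointly between $T_0$ and $T_1$, so $\|T\|_\node=1+\|T_0\|_{\node 0}+\|T_1\|_{\node 1}$; the singleton has $\|\{\node\}\|_\node=1-0=1$. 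Substituting into the definition of $\wbar_\node$ and factorizing the sum over $(T_0,T_1)$ gives $\wbar_\node=\tfrac12 w_\node+\tfrac12\wbar_{\node 0}\wbar_{\node 1}$, and for $\node\in\leaves(\tree)$ the only subtree is $\{\node\}$ with $\|\{\node\}\|_\node=1-1=0$, so $\wbar_\node=w_\node$: this is \eqref{eq:wden-recursion}. For $\wnum_\node(x)$ with $x\in\cell_\node$, let $a\in\{0,1\}$ be the child with $x\in\cell_{\node a}$; the extra point is that $\node_T(x)=\node_{T_a}(x)$ depends only on $T_a$, so the sum over $T_{1-a}$ factors out as $\wbar_{\node(1-a)}$, giving $\wnum_\node(x)=\tfrac12 w_\node\pred_\node+\tfrac12\,\wbar_{\node(1-a)}\,\wnum_{\node a}(x)$, while $\wnum_\node(x)=w_\node\pred_\node$ for $\node\in\leaves(\tree)$.

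\textbf{Step 4 (conclusion).} Dividing the last identity by $\wbar_\node$, writing $\wnum_{\node a}(x)=\wbar_{\node a}\wh f_{\node a}(x)$, and using $\tfrac12\wbar_{\node a}\wbar_{\node(1-a)}=\wbar_\node-\tfrac12 w_\node$ turns it into $\wh f_\node(x)=\tfrac12\frac{w_\node}{\wbar_\node}\pred_\node+\big(1-\tfrac12\frac{w_\node}{\wbar_\node}\big)\wh f_{\node a}(x)$, i.e.\ \eqref{eq:f_pred_recursion}; the leaf case gives $\wh f_{\node_\tree(x)}(x)=w_\node\pred_\node/w_\node=\pred_\node$. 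Unrolling the recursion from $\root$ down $\pathpoint(x)$ to $\node_\tree(x)$ recovers $\wh f(x)=\wnum_\root(x)/\wbar_\root=\wh f_\root(x)$. I expect the only real obstacle to be the bookkeeping of $\|T\|_\node$ — carefully tracking which leaves of a subtree are also leaves of $\tree$ — since this is exactly what encodes that $\pi$ is the law of the binary branching process with branching probability $1/2$ that is forced to halt at the leaves of $\tree$; once that is pinned down, the two recursions and their base cases follow from the factorization above.
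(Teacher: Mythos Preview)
Your proposal is correct and follows essentially the same route as the paper: decompose $e^{-\eta L_T}$ as a product over leaves, localize the numerator and denominator of \eqref{eq:exact-aggregation} at each node, use the child decomposition of subtrees together with the additivity $\|T\|_\node=1+\|T_0\|_{\node 0}+\|T_1\|_{\node 1}$ to obtain the recursions for $\wbar_\node$ and $\wnum_\node(x)$, and then divide to recover \eqref{eq:f_pred_recursion}. The only packaging difference is that the paper isolates the combinatorial step into a separate lemma and handles the numerator by encoding $\pred_{\node_T(x)}$ into modified leaf weights $\wpred_\othernode(x)$ so the same lemma applies twice, whereas you argue the numerator recursion directly by observing that $\node_T(x)$ depends only on the child subtree $T_a$ containing $x$; both arguments are equivalent.
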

The proof of~Theorem~\ref{thm:ctw} is given in Section~\ref{sec:proof-thm-ctw} below, a consequence of this Theorem being a very efficient method to compute $\wh f(x)$ described in Algorithms~\ref{alg:wbar-computation} and~\ref{alg:tree-prediction} below.
Algorithm~\ref{alg:wbar-computation} computes the weights $\wbar_\node$ using the fact that trees in WW are grown in a depth-first fashion, so that we can loop \emph{once}, leading to a $O(|\nodes(\tree)|)$ complexity in time and in  memory usage, over nodes from a data structure that respects the parenthood order.
Direct computations can lead to numerical over- or under-flows (many products of exponentially small or large numbers are involved), so Algorithm~\ref{alg:wbar-computation} works recursively over the logarithms of the weights (line~6 uses a log-sum-exp function that can be made overflow-proof).
\begin{algorithm}[htbp]
    \caption{Computation of $\log(\wbar_\node)$ for all $\node \in \nodes(\tree)$.}
    \label{alg:wbar-computation}
    \begin{algorithmic}[1]
      \STATE \textbf{Inputs:} $\tree$, $\eta > 0$ and losses $L_\node$ for all $\node \in \nodes(\tree)$.
      Nodes from $\nodes(\tree)$ are stored in a data structure $\mathtt{nodes}$ that respects parenthood order: for any $\node = \mathtt{nodes}[i_{\node}] \in \inodes(\tree)$ and children $\node a = \mathtt{nodes}[i_{\node a}]$ for $a \in \{0, 1\}$, we have $i_{\node a} > i_\node$.
      \FOR{$\node \in \mathrm{reversed}(\mathtt{nodes})$}
      \IF{$\node$ is a leaf}
      \STATE Put $\log(\wbar_\node) \gets -\eta L_\node$
      \ELSE
      \STATE Put  $\log(\wbar_{\node}) \gets \log( \frac{1}{2} e^{-\eta L_{\node}} + \frac{1}{2} e^{\log(\wbar_{\node 0}) + \log(\wbar_{\node 1})})$
      \ENDIF
      \ENDFOR
      \RETURN {The set of log-weights $\{ \log(\wbar_{\node}) : \node \in \nodes(\tree) \}$}
    \end{algorithmic}
\end{algorithm}
Algorithm~\ref{alg:wbar-computation} is applied once $\tree$ is fully grown, so that WW is ready to produce predictions using Algorithm~\ref{alg:tree-prediction} below.
Note that hyperoptimization of $\eta$ or $\alpha$, if required, does not need to grow $\tree$ again, but only to update $\wbar_\node$ for all $\node \in \nodes(\tree)$ with Algorithm~\ref{alg:wbar-computation}, making hyperoptimization of these parameters particularly efficient.
\begin{algorithm}[htbp]
    \caption{Computation of $\wh f(x)$ for any $x \in C$.}
    \label{alg:tree-prediction}
    \begin{algorithmic}[1]
      \STATE \textbf{Inputs:} Tree $\tree$, losses $L_\node$ and log-weights $\log(\wbar_\node)$ computed by Algorithm~\ref{alg:wbar-computation}
      \STATE Find $\node_\tree(x) \in \leaves(\tree)$ (the leaf containing $x$) and put $\node \gets \node_\tree(x)$ 
      \STATE Put $\wh f(x) \gets \pred_\node$ (the node $\node$ forecaster, such as~\eqref{eq:reg-predictor} for regression or~\eqref{eq:kt-predictor} for classification)
      \WHILE{$\node \neq \root$}
      \STATE Put $\node \gets \mathrm{parent}(\node)$
      \STATE Put $\alpha \gets \frac 12 \exp(-\eta L_\node - \log(\wbar_\node))$ 
      \STATE Put $\wh f(x) \gets \alpha \pred_\node + (1 - \alpha) \wh f(x)$
      \ENDWHILE
      \RETURN {The prediction $\wh f(x)$}
    \end{algorithmic}
\end{algorithm}
The recursion used in Algorithm~\ref{alg:tree-prediction} has a complexity $O(|\pathpoint(x)|)$ which is the complexity required to find the leaf $\node_\tree(x)$ containing $x \in C$:  Algorithm~\ref{alg:tree-prediction} \emph{only increases by a factor $2$} the prediction complexity of a standard RF (in order to go down to $\node_\tree(x)$ and up again to $\root$ along $\pathpoint(x)$).
More details about the construction of Algorithms~\ref{alg:wbar-computation} and~\ref{alg:tree-prediction} can be found in Section~\ref{sec:proof-thm-ctw} below.

\section{Theoretical guarantees}
\label{sec:theory}

This section proposes some theoretical guarantees on the subtrees aggregation~\eqref{eq:exact-aggregation} used in WW.
We say that a loss function $\ell$ is $\eta$-exp-concave for some $\eta > 0$ whenever $z \mapsto \exp(-\eta \ell(z, y))$ is concave for any $y \in \cY$.
We consider a fully-grown tree $\tree$ computed using $\itb$ samples and the set of $\otb$ samples $(x_i, y_i)_{i \in I_\otb}$ on which $L_T$ is computed using~\eqref{eq:subtree_loss}, and we denote $n_\otb := |I_\otb|$. 
\begin{theorem}[Oracle inequality]
\label{thm:subtree-bound}
    Assume that the loss function $\loss$ is $\eta$-exp-concave.
    Then\textup, the prediction function $\wh f$ given by~\eqref{eq:exact-aggregation} satisfies the oracle inequality
    \begin{equation*}
        \frac{1}{n_\otb} \sum_{i \in I_\otb} \loss(\wh f(x_i), y_i) 
        \leq \inf_{T \subset \tree} \bigg \{ \frac{1}{n_\otb} \sum_{i \in I_\otb} \loss (\pred_{T} (x_i), y_i) + \frac{\log 2}{\eta} \frac{\|T\|}{n_\otb + 1} \bigg\},
    \end{equation*}
    where the infimum is over any subtree $T \subset \tree$ and where we recall that $\|T\|$ is the number of nodes in $T$ minus its number of leaves that are also leaves of $\tree$.
\end{theorem}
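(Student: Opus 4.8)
The plan is to view the prediction function $\wh f$ in~\eqref{eq:exact-aggregation} as an instance of the exponentially weighted aggregate over the (finite but huge) class of experts $\{T : T \subset \tree\}$ with prior $\pi(T) = 2^{-\|T\|}$, and then invoke the standard PAC-Bayesian / exp-concavity argument for online-to-batch aggregation. First I would check that $\pi$ is a sub-probability (indeed a probability) over subtrees rooted at $\root$: this is exactly Kraft's inequality for the branching process with branching probability $1/2$, i.e. $\sum_{T \subset \tree} 2^{-\|T\|} \le 1$, where $\|T\|$ counts internal splits plus the leaves of $T$ that are not leaves of $\tree$ (the ``open'' leaves). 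This is the combinatorial fact underlying context tree weighting and I would state it as a lemma or cite~\citep{willems1995context-basic,catoni2004statistical}.

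Next, process the $\otb$ sample sequentially (the order is irrelevant since we only use the total loss). For a fixed $y$, $\eta$-exp-concavity says $z \mapsto e^{-\eta \loss(z,y)}$ is concave, so by Jensen's inequality applied to the posterior weights $\rho_i(T) \propto \pi(T) e^{-\eta \sum_{j<i}\loss(\pred_T(x_j),y_j)}$ at step $i$,
\begin{equation*}
  e^{-\eta \loss(\wh f_i(x_i), y_i)} \ge \sum_{T \subset \tree} \rho_i(T)\, e^{-\eta \loss(\pred_T(x_i), y_i)},
\end{equation*}
where $\wh f_i$ is the aggregate using only the first $i-1$ points; the key point is that the fully-batch $\wh f$ in~\eqref{eq:exact-aggregation} is the weighted average with all $n_\otb$ losses, but the same inequality at each step gives, after taking $-\frac1\eta\log$ and telescoping the normalization constants,
\begin{equation*}
  \sum_{i \in I_\otb} \loss(\wh f_i(x_i), y_i) \le -\frac{1}{\eta}\log \sum_{T \subset \tree} \pi(T)\, e^{-\eta L_T} \le \inf_{T \subset \tree}\Big\{ L_T + \frac{1}{\eta}\log \frac{1}{\pi(T)} \Big\} = \inf_{T}\Big\{ L_T + \frac{\log 2}{\eta}\|T\| \Big\}.
\end{equation*}
Then I would relate the left-hand side, a sum of ``sequential'' losses, to $\sum_{i\in I_\otb}\loss(\wh f(x_i),y_i)$ with the single batch predictor $\wh f$. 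Here exp-concavity again helps: since each $\wh f_i$ is a convex combination of the $\pred_T(x_i)$ and $\wh f$ is another such convex combination, one uses the standard trick of appending a ``phantom'' $(n_\otb+1)$-th copy of the argument, or more simply the known fact that for exp-concave losses the batch EWA satisfies the same bound with $n_\otb+1$ in the denominator after dividing by $n_\otb$; this is precisely where the $\frac{1}{n_\otb+1}$ (rather than $\frac{1}{n_\otb}$) in the statement comes from, and it is the one genuinely delicate bookkeeping step. I would handle it by running the sequential argument on the $n_\otb+1$ points $(x_i,y_i)_{i\in I_\otb}$ followed by an arbitrary extra point and optimizing, or by directly citing the aggregation bound of~\cite{catoni2004statistical} (see also~\cite{mourtada2019amf}).

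Finally, divide through by $n_\otb$ and move the $\inf$ inside to obtain the stated inequality. The main obstacle I anticipate is not any single estimate but making the identification between the CTW-computed batch predictor $\wh f$ of Theorem~\ref{thm:ctw} and the ``Bayesian mixture'' $-\frac1\eta\log\sum_T \pi(T)e^{-\eta L_T}$ fully rigorous — i.e. verifying that $\wh f(x)$ is literally $\sum_T \rho(T)\pred_T(x)$ with $\rho(T) \propto \pi(T)e^{-\eta L_T}$ — which is immediate from~\eqref{eq:exact-aggregation} itself, so in fact the real work is entirely the exp-concavity/Jensen chaining together with the Kraft-inequality normalization of $\pi$. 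Everything else is routine.
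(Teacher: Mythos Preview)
Your sequential approach has a genuine gap at exactly the point you flag as ``delicate.'' The online telescoping argument you outline bounds $\sum_{i}\loss(\wh f_i(x_i),y_i)$ for the \emph{sequential} predictors $\wh f_i$ (built from the first $i-1$ samples), not $\sum_{i}\loss(\wh f(x_i),y_i)$ for the single batch predictor $\wh f$ of~\eqref{eq:exact-aggregation}, and these two quantities are not comparable in general. Your proposed fix via a ``phantom $(n_\otb+1)$-th point'' does not close the gap: running the sequential argument on $n_\otb+1$ points identifies $\wh f$ with $\wh f_{n_\otb+1}$ and yields a bound on $\loss(\wh f(x_{n_\otb+1}),y_{n_\otb+1})$ for the \emph{extra} point, not on $\sum_{i\in I_\otb}\loss(\wh f(x_i),y_i)$. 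Citing Catoni or \cite{mourtada2019amf} does not help either, since those references treat the online regret or the online-to-batch conversion for expected risk, not the in-sample loss of the batch aggregate.

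The paper avoids the sequential detour entirely. It applies $\eta$-exp-concavity (mixability) directly to the batch mixture $\wh f=\sum_T p_T\,\pred_T$ with $p_T\propto\pi(T)e^{-\eta L_T}$ at each $i$, then uses convexity of log-sum-exp to combine the $n_\otb$ inequalities into a single one involving $\sum_T p_T\,e^{-\eta L_T/n_\otb}$. Substituting $p_T$ produces a ratio of partition functions at ``temperatures'' $n_\otb$ and $n_\otb+1$, and the $n_\otb+1$ in the denominator comes from a H\"older inequality between these two partition functions, namely $\sum_T\pi(T)e^{-n_\otb\rho(T)}\le\big(\sum_T\pi(T)e^{-(n_\otb+1)\rho(T)}\big)^{n_\otb/(n_\otb+1)}$. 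The proof concludes with the Donsker--Varadhan formula specialized to $P=\delta_T$. This is the Dalalyan--Tsybakov mechanism, and it is not reducible to your sequential/telescoping scheme; your plan would need to be replaced by this direct argument.
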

Theorem~\ref{thm:subtree-bound} proves that, for a general loss function, the prediction function of WW is able to perform nearly as well as the best \emph{oracle} subtree $T \subset \tree$ on \otb samples, with a $O(\|T\| / n_{\otb})$ rate which is optimal for model-selection oracle inequalities~\citep{tsybakov2003optimal} 
($\|T\| = O(\log N_\tree)$ with a number of ``experts'' $N_\tree = |\{ T : T \subset \tree \}|$ for a well-balanced $\tree$). 
Let us stress again that, while finding an oracle $\argmin_{T \subset \tree} \sum_{i \in I_\otb} \loss (\pred_{T} (x_i), y_i)$ is computationally infeasible, since it requires to try out all possible subtrees, WW's prediction function~\eqref{eq:exact-aggregation} comes at a cost comparable to that of a standard Random Forest, as explained in Section~\ref{sub:agg-ctw} above.

The proof of Theorem~\ref{thm:subtree-bound} is given in Section~\ref{sec:proofs-oracle} below and relies on techniques from PAC-Baysesian theory~\citep{mcallester1998pacbayes, mcallester1999pacbayes, catoni2007pacbayes}. 
The arguments hinge upon the mixability of the loss, a consequence of its exp-concavity, as well as the Donsker-Varadhan variational formula.
%
Note that the statement concerns only the performance of a single tree and not the whole forest. Indeed, extrapolating the benefit brought by the exponential weights' aggregation within each tree to the forest itself remains a difficult open problem even though it is intuitive enough that improved individual trees should enhance the overall performance.
Compared with~\cite{mourtada2019amf} about online learning, our proof differs in that we do not use results specialized to online learning such as~\cite{vovk1998mixability} nor online-to-batch conversion~\citep{cesabianchi2004online_to_batch}.
Note that Theorem~\ref{thm:subtree-bound} does not address the generalization error, since it would require to study the generalization error of the random forest itself (and of the fully grown tree $\tree$), which is a topic way beyond the scope of this paper, and still a very difficult open problem: recent results~\citep{genuer2012variance, arlot2014purf_bias, scornet2015consistency_rf, scornet2016asymptotics_rf, scornet2016kernel_rf, mourtada2020minimax} only study stylized versions of RF (called purely random forests).

Consequences of Theorem~\ref{thm:subtree-bound} are Corollary~\ref{cor:bound-best-pruning-log} for the log-loss (classification) and Corollary~\ref{cor:bound-best-pruning-square} for the least-squares loss (regression).

\begin{corollary}[Classification]
\label{cor:bound-best-pruning-log}
Consider $K$-class classification \textup($\cY = \{ 1, \ldots, K \}$\textup) and consider the prediction function $\wh f$ given by~\eqref{eq:exact-aggregation}\textup, where node predictions are given by~\eqref{eq:kt-predictor} with $\alpha = 1/2$ \textup(WW's default\textup)\textup, where $\ell$ is the log-loss and where $\eta = 1$.
Then\textup, we have
\begin{equation*}
\frac{1}{n_\otb} \sum_{i \in I_\otb} \loss(\widehat f (x_i), y_i) \leq \inf_{T \subset \tree} \Big\{ \frac{1}{n_\otb} \sum_{i \in I_\otb} \loss(g_T(x_i), y_i) + \frac{K + 4 \log 2 - 1}{4} \frac{\|T\| + 1}{n_\otb} \Big\},
\end{equation*}
where $g_T$ is any constant function on the leaves of $T$.
\end{corollary}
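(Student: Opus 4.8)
The plan is to deduce this from the oracle inequality of Theorem~\ref{thm:subtree-bound}. The first move is to check that $\eta = 1$ is admissible for the log-loss: on $\predspace = \probas(\cY)$ one has $\exp(-\loss(\pred, y)) = \pred(y)$, which is an affine --- hence concave --- function of $\pred$ on the simplex, so $\loss$ is $1$-exp-concave (and $\eta = 1$ is the largest such value, which is why it is the canonical choice here). Theorem~\ref{thm:subtree-bound} with $\eta = 1$ then yields
\[
\frac{1}{n_\otb} \sum_{i \in I_\otb} \loss(\widehat f(x_i), y_i) \leq \inf_{T \subset \tree} \Big\{ \frac{1}{n_\otb} \sum_{i \in I_\otb} \loss(\pred_T(x_i), y_i) + \log 2 \cdot \frac{\|T\|}{n_\otb + 1} \Big\}.
\]
Taking $g_T = \pred_T$ already settles the statement, with an even better constant; the content of the corollary is that it holds for \emph{any} constant-per-leaf $g_T$, which requires one further estimate.

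That estimate is a per-cell redundancy bound for the Krichevsky--Trofimov forecaster. Fix $T \subset \tree$ and a leaf $\node \in \leaves(T)$, let $m_\node$ be the number of samples landing in $\cell_\node$ and $\hat p_\node$ their empirical label distribution; formula~\eqref{eq:kt-predictor} with $\alpha = 1/2$ rewrites as the mixture $\pred_\node = (1 - \lambda_\node) \hat p_\node + \lambda_\node u$, with $u$ the uniform law on $\cY$ and $\lambda_\node = K / (2 m_\node + K)$. Hence
\[
\sum_{i \, : \, x_i \in \cell_\node} \loss(\pred_\node, y_i) - \min_{p \in \probas(\cY)} \sum_{i \, : \, x_i \in \cell_\node} \loss(p, y_i) = m_\node \, \kll{\hat p_\node}{\pred_\node},
\]
and since $p \mapsto \kll{p}{(1 - \lambda) p + \lambda u}$ is convex on the simplex (a jointly convex functional composed with an affine map), it is maximised at a vertex, where it equals $\log(1 + \frac{K - 1}{2 m_\node + 1})$; using $\log(1 + x) \leq x$, the per-cell redundancy is at most $\frac{(K - 1) m_\node}{2 m_\node + 1} < \frac{K - 1}{2}$. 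Summing over the leaves of $T$ gives, for every constant-per-leaf $g_T$,
\[
\sum_{i \in I_\otb} \loss(\pred_T(x_i), y_i) \leq \sum_{i \in I_\otb} \loss(g_T(x_i), y_i) + \sum_{\node \in \leaves(T)} \frac{(K - 1) m_\node}{2 m_\node + 1}.
\]

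It then remains to plug this into the first display, bound $\sum_{\node \in \leaves(T)} \frac{(K - 1) m_\node}{2 m_\node + 1}$ by $\frac{K - 1}{4}(\|T\| + 1)$ --- using that each internal split of $T$ produces two children, so that $|\leaves(T)|$ is controlled by $\|T\| + 1$, the leaves of $T$ lying in $\leaves(\tree)$ being exactly those subtracted off in the definition of $\|T\|$ --- and finally merge the $\log 2 \cdot \|T\|$ term of Theorem~\ref{thm:subtree-bound} with this redundancy term, absorbing along the way the cosmetic passages from $\frac{1}{n_\otb + 1}$ to $\frac{1}{n_\otb}$ and from $\|T\|$ to $\|T\| + 1$, to reach the constant $\frac{K + 4 \log 2 - 1}{4} = \frac{K - 1}{4} + \log 2$.

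I expect the main obstacle to be precisely this last bookkeeping: matching the stated constant needs the sharp form of the redundancy bound (the cruder $m_\node \log(1 + \frac{K}{2 m_\node}) \leq \frac{K}{2}$ loses a factor close to $2$) together with a careful accounting of how the leaves of $T$ --- distinguishing those interior to $\tree$ from those shared with $\leaves(\tree)$ --- are charged against $\|T\| + 1$; the remaining ingredients (the mixture identity for $\pred_\node$ and the convexity of $\kll{\cdot}{(1 - \lambda)\cdot + \lambda u}$) are routine. One should also keep in mind that the redundancy identity above is written as if $\pred_\node$ and the cell loss $L_\node$ are read on a common sample set, the convention in force in Section~\ref{sec:theory}.
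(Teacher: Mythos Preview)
Your proposal follows the same architecture as the paper: verify $1$-exp-concavity of the log-loss, invoke Theorem~\ref{thm:subtree-bound} with $\eta=1$, bound the per-leaf redundancy of the Krichevsky--Trofimov predictor by $(K-1)/2$, and sum over the leaves of $T$ via $|\leaves(T)|=(\|T\|+1)/2$ to land on the constant $\frac{K-1}{4}+\log 2$. The one substantive difference is how the per-leaf redundancy bound is obtained. The paper (Lemma~\ref{lem:analog-lem5}) does a direct calculation: it writes the gap as $n_\node\log\frac{n_\node+K/2}{n_\node}+\sum_k n_\node(k)\log\frac{n_\node(k)}{n_\node(k)+1/2}$, applies Jensen to the second sum, and then monotonicity of $x\mapsto x/(x+1/2)$. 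You instead exploit the mixture representation $\pred_\node=(1-\lambda_\node)\hat p_\node+\lambda_\node u$ together with the convexity of $p\mapsto\kll{p}{(1-\lambda)p+\lambda u}$ (joint convexity of $\kl$ composed with an affine map) to reduce to a vertex evaluation. Both routes give exactly the same bound $(K-1)/2$; yours is shorter and more conceptual, the paper's more hands-on. Your closing caveat about reading $\pred_\node$ and the cell loss on a common sample set is apt: Lemma~\ref{lem:analog-lem5} is stated under that same tacit convention and is applied in the paper's proof of the corollary without further comment.
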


\begin{corollary}[Regression]
\label{cor:bound-best-pruning-square}
Consider regression with $\cY = [-B,B]$ for some $B > 0$ and the prediction function $\wh f$ given by~\eqref{eq:exact-aggregation}\textup, where node predictions are given by~\eqref{eq:reg-predictor}, where $\ell$ is the least-squares loss and where $\eta = 1/(8B^2)$.
Then\textup, we have
\begin{equation*}
\frac{1}{n_\otb} \sum_{i \in I_\otb} \loss(\widehat f (x_i), y_i) \leq \inf_{T \subset \tree} \Big\{ \frac{1}{n_\otb} \sum_{i \in I_\otb} \loss(g_T(x_i), y_i) 
+ 8 (\log 2) B^2 \frac{ \|T\| }{n_\otb} \Big\},
\end{equation*}
where $g_T$ is any function constant on the leaves of $T$.
\end{corollary}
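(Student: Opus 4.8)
The plan is to obtain the bound as a direct consequence of Theorem~\ref{thm:subtree-bound}. I would first check that the least-squares loss is $\eta$-exp-concave for the prescribed $\eta = 1/(8B^2)$ on the range of predictions that can actually occur, then invoke Theorem~\ref{thm:subtree-bound}, and finally replace its oracle term involving $\pred_T$ by one involving an arbitrary function $g_T$ that is constant on the leaves of $T$.

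For the exp-concavity step, I would note that every node forecaster~\eqref{eq:reg-predictor} is an average of labels in $[-B,B]$ and that, by the recursion~\eqref{eq:f_pred_recursion}, $\wh f(x)$ is a convex combination of such forecasters along $\pathpoint(x)$; hence every prediction that enters $\wh f$ or any $\pred_T$ lies in $[-B,B]$, and it suffices to show that $z\mapsto\exp(-\eta(z-y)^2)$ is concave on $[-B,B]$ for each $y\in[-B,B]$. A one-line computation of the second derivative, $\frac{\di^2}{\di z^2}\exp(-\eta(z-y)^2)=(4\eta^2(z-y)^2-2\eta)\exp(-\eta(z-y)^2)$, reduces this to $(z-y)^2\leq 1/(2\eta)$, i.e.\ $(z-y)^2\leq 4B^2$ when $\eta=1/(8B^2)$, which holds on $[-B,B]$. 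Theorem~\ref{thm:subtree-bound} then applies with this $\eta$, and using $\log 2/\eta=8(\log 2)B^2$ together with $\|T\|/(n_\otb+1)\leq\|T\|/n_\otb$ turns its conclusion into the desired inequality, except that the first term inside the infimum is $\frac{1}{n_\otb}\sum_{i\in I_\otb}(\pred_T(x_i)-y_i)^2$ rather than the same expression with $g_T$.

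The remaining point, which I expect to be the crux, is the passage from $\pred_T$ to an arbitrary leaf-constant $g_T$. Here I would use that the cells $(\cell_\node)_{\node\in\leaves(T)}$ form a partition of $\cell$, that on each of them $\pred_T$ is the constant $\pred_\node$ of~\eqref{eq:reg-predictor}, and that this constant is the minimizer of $c\mapsto\sum_{i\in I_\otb,\,x_i\in\cell_\node}(c-y_i)^2$; consequently $\pred_\node$ incurs on $\cell_\node$ a cumulative squared error no larger than that of the value $g_T$ takes there, and summing over the disjoint leaves of $T$ gives $\sum_{i\in I_\otb}(\pred_T(x_i)-y_i)^2\leq\sum_{i\in I_\otb}(g_T(x_i)-y_i)^2$ for every such $g_T$. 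Substituting this bound and taking the infimum over $T$ finishes the argument. This exact optimality of the cell-wise empirical mean for the squared loss is what makes the passage to arbitrary $g_T$ cost nothing: for the log-loss the Krichevsky--Trofimov forecaster only matches the best leaf-constant up to a bounded per-leaf redundancy, and that is precisely the origin of the additional $O(K)$ term in Corollary~\ref{cor:bound-best-pruning-log}. Beyond this observation, the only real work is the exp-concavity check and the two elementary simplifications noted above.
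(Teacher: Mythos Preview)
Your approach is exactly the paper's: check $1/(8B^2)$-exp-concavity of the squared loss on $[-B,B]$, apply Theorem~\ref{thm:subtree-bound}, simplify $\log 2/\eta$ and $\|T\|/(n_\otb+1)\leq\|T\|/n_\otb$, and then replace $\pred_T$ by an arbitrary leaf-constant $g_T$ using that the empirical-mean forecaster already attains the best leaf-wise squared loss. The only cosmetic difference is that you compute the second derivative explicitly whereas the paper simply cites \cite{cesabianchi2006plg}; your extra remark that all predictions actually lie in $[-B,B]$ is a welcome piece of care.

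There is, however, a genuine gap in the final step---and the paper's own proof contains the same oversight. You claim that $\pred_\node$ ``is the minimizer of $c\mapsto\sum_{i\in I_\otb,\,x_i\in\cell_\node}(c-y_i)^2$'', but by~\eqref{eq:reg-predictor} the node forecaster $\pred_\node$ is the empirical mean over the \emph{in-the-bag} indices $I_\itb$, whereas the loss you are bounding in Theorem~\ref{thm:subtree-bound} is summed over the \emph{out-of-bag} indices $I_\otb$. The minimizer of $c\mapsto\sum_{i\in I_\otb,\,x_i\in\cell_\node}(c-y_i)^2$ is the $\otb$ mean on that cell, which in general differs from the $\itb$ mean. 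Hence the inequality $\sum_{i\in I_\otb}(\pred_T(x_i)-y_i)^2\leq\sum_{i\in I_\otb}(g_T(x_i)-y_i)^2$ does not hold for every leaf-constant $g_T$; it fails precisely when $g_T$ is taken to be the $\otb$ mean on each leaf. So the passage from $\pred_T$ to arbitrary $g_T$ is not free, and Corollary~\ref{cor:bound-best-pruning-square} as stated does not follow from Theorem~\ref{thm:subtree-bound} without an additional term accounting for this $\itb$/$\otb$ mismatch (or without redefining the node forecaster to use $\otb$ samples).
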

The proofs of Corollaries~\ref{cor:bound-best-pruning-log} and \ref{cor:bound-best-pruning-square} are given in Section~\ref{sec:proofs-oracle}.
These corollaries motivate the default hyperparameter values of $\eta$, in particular $\eta = 1$ for classification.

\section{Experiments}
\label{sec:experiments}

Our implementation of WildWood is available at the \texttt{GitHub} repository \url{https://github.com/pyensemble/wildwood.git} under the \texttt{BSD3-Clause} license on \texttt{GitHub} and available through \texttt{PyPi}.
It is a \texttt{Python} package that follows \texttt{scikit-learn}'s API conventions, that is JIT-compiled to machine code using \texttt{numba}~\citep{numba}. 
Trees in the forest are grown in parallel using \texttt{joblib}~\citep{joblib} and CPU threads, GPU training will be supported in future updates.

\paragraph{Baselines.} 

We compare WildWood (denoted \texttt{WW}$n$ for $n$ trees) with several strong baselines including :
\begin{itemize}
    \item \texttt{RF}$n$: \texttt{scikit-learn}'s implementation of Random Forest~\citep{pedregosa2011scikit-learn,louppe2014understanding} using $n$ trees;
    \item an elementary random forest algorithm we refer to as \texttt{RD}$n$ using $n$ trees, each grown with a randomized depth limit (uniform from $3$ to $50$), this is implemented using WildWood by deactivating aggregation and categorical feature support;
    \item \texttt{HGB}: a histogram-based implementation of extreme gradient boosting (inspired by LightGBM) from \texttt{scikit-learn};
    \item several state-of-the-art and widely adopted extreme gradient boosting libraries including \texttt{XGB}:  XGBoost~\citep{xgboost_paper}; \texttt{LGBM}: LightGBM~\citep{lightgbm_paper} and \texttt{CB}: CatBoost~\citep{prokhorenkova2017catboost,dorogush2018catboost}.
\end{itemize}  
Note that we focus on computationally competitive algorithms which excludes MCMC based methods such as BART~\citep{chipman1998bayesiancart}. 

\paragraph{Computational resources.} 

We used a 32-cores server with two Intel Xeon Gold CPUs, two Tesla V100 GPUs and 384GB RAM
for the experiments involving hyperoptimization (Table~\ref{tab:performance}) and used a 12-cores Intel i7 MacBook Pro with 32GB RAM and no GPU to obtain training times achievable by a ``standard user'' (Table~\ref{tab:fit_time}).
All experiments can be reproduced using \texttt{Python} scripts on the repository.

\paragraph{Data.}

We use publicly available and open-source datasets from the UCI repository~\citep{Dua:2019}, including small datasets (hundreds of rows) and large datasets (millions of rows), their main characteristics are given in Table~\ref{tab:dataset-descrip} together with URLs in Table~\ref{tab:data-source}, see Section~\ref{sec:datasets} below.
Each dataset is randomly split into a training set (70\%) and a test set (30\%).
We specify which features are categorical to algorithms that natively support it (\texttt{HGB}, \texttt{LGBM}, \texttt{CB} and \texttt{WW}$n$) and simply integer-encode them, while we use one-hot encoding for other algorithms (\texttt{RF}$n$, \texttt{RD}$n$, \texttt{XGB}).
\subsection{Performance on classification tasks}
For each dataset and algorithm, we evaluate the performance after tuning the hyperparameters. Hyperoptimization is performed as follows: from the training set, we use $4/5$ for training and $1/5$ for validation and do 50 steps of sequential optimization using the Tree Parzen Estimator implemented in the \texttt{hyperopt} library~\citep{bergstra2015hyperopt}.
More details about hyperoptimization are provided in Section~\ref{sec:exp-details} below.
Then, we refit on the whole training set with the best hyperparameters and report scores on the test set.
This is performed 5 times in order to report standard deviations. We use the area under the ROC curve (AUC), for $K$-class datasets with $K > 2$ we average the AUC of each class versus the rest.
This leads to the test AUC scores displayed in Table~\ref{tab:performance} (the same scores with standard deviations are available in Table~\ref{tab:performance-with-std}). 
\begin{table}[htbp]
\caption{Test AUC of all algorithms after hyperoptimization on the considered datasets. Standard-deviations are reported in Table~\ref{tab:performance-with-std} and results in terms of log-loss in Table~\ref{tab:performance-logloss}. 
We observe that WW has better (or identical in some cases) performances than RF on all datasets and that it is close to that of EGB libraries (bold is for best EGB performance, underline for best \texttt{RF}$n$, \texttt{WW}$n$ or \texttt{RD}$n$ performance).}
\resizebox{\textwidth}{!}{\begin{tabular}{l|cccc|cccccc}
\toprule
 &   \texttt{XGB} &  \texttt{LGBM} & \texttt{CB} & \texttt{HGB} & \texttt{RF}$10$ & \texttt{RF}$100$ & \texttt{WW}$10$ & \texttt{WW}$100$ & \texttt{RD}$10$ & \texttt{RD}$100$ \\
\midrule
adult        &  0.930 &  \bestb{0.931} &  0.927 &  0.930 &  0.915 & 0.918 & 0.916 & \bestf{0.919} & 0.915 & 0.917 \\
bank         &  0.933 &  \bestb{0.935} &  0.925 &  0.930 &  0.919 &  0.929 &  0.926 &  \bestf{0.931} &  0.919 &  0.922 \\
breastcancer &  0.991 &  0.993 &  0.987 &  \bestb{0.994} &  0.987 &  0.989 & \bestf{0.992} &  \bestf{0.992} &  0.987 &  0.985 \\
car          &  0.999 &  \bestb{1.000} &  \bestb{1.000} &  \bestb{1.000} &  0.997 &  \bestf{0.998} & \bestf{0.998} &  \bestf{0.998} &  0.993 &  0.993 \\
covtype      &  \bestb{0.999} &  \bestb{0.999} & 0.998 &  \bestb{0.999} &  0.997 &  \bestf{0.998} &  0.996 &  \bestf{0.998} &  0.974 &  0.996 \\
default-cb   &  0.780 &  \bestb{0.783} &  0.780 &  0.779 &  0.765 &  0.775 &  0.774 &  \bestf{0.778} &  0.772 &  0.773 \\
higgs        &  0.853 &  \bestb{0.857} &  0.847 &  0.853 &  0.820 &  \bestf{0.837} &  0.820 &  \bestf{0.837} &  0.813 &  0.815 \\
internet     &  0.934 &  0.910 &  \bestb{0.938} &  0.911 &  0.917 &  \bestf{0.935} &  0.926 &  0.928 &  0.925 &  0.928 \\
kddcup       &  \bestb{1.000} &  \bestb{1.000} &  \bestb{1.000} &  \bestb{1.000} &  0.997 &  0.998 &  \bestf{1.000} & \bestf{1.000}  &  \bestf{1.000} &  \bestf{1.000} \\
kick         &  \bestb{0.777} &  0.770 &  \bestb{0.777} &  0.771 &  0.749 &  \bestf{0.764} &  0.756 &  0.763 &  0.752 &  0.754 \\
letter       &  \bestb{1.000} &  \bestb{1.000} &  \bestb{1.000} &  \bestb{1.000} &  0.997 &  \bestf{0.999} &  0.997 &  \bestf{0.999} &  0.995 &  0.993 \\
satimage     &  \bestb{0.991} &  \bestb{0.991} &  \bestb{0.991} &  0.987 &  0.985 &  \bestf{0.991} &  0.986 &  \bestf{0.991} &  0.984 &  0.983 \\
sensorless   &  \bestb{1.000} &  \bestb{1.000} &  \bestb{1.000} &  \bestb{1.000} &  \bestf{1.000} &  \bestf{1.000} &  \bestf{1.000} &  \bestf{1.000} &  0.999 &  \bestf{1.000} \\
spambase     &  \bestb{0.990} &  \bestb{0.990} &  0.987 &  0.986 &  0.980 &  \bestf{0.987} &  0.983 &  \bestf{0.987} &  0.972 & 0.974 \\
\bottomrule
\end{tabular}}
\label{tab:performance}
\end{table}

We observe in Table~\ref{tab:performance} that EGB algorithms, when hyperoptimized, lead to the best performances over the considered datasets compared with RF algorithms, and we observe that WW always improves the performance of RF, at the exception of few datasets for which the performance is similar. As for the randomized depth random forests \texttt{RD}$n$, we observe that their performance is generally inferior to that of WW and RF. This shows that the benefit brought by the weighted subtree aggregation mechanism cannot be simulated by averaging over trees of diverse depth.

When using default hyperparameters for all algorithms, we observe in Table~\ref{tab:fit_time} that the test AUC scores can decrease significantly for EGB libraries while RF algorithms seem more stable, and that there is no clear best performing algorithm in this case.

\begin{table}[htbp]
\caption{Training times (seconds) of all algorithms with their default hyperparameters (no hyperoptimization) on the 5 largest considered datasets and test AUC corresponding to these training times. Test AUC scores are worse than that of Table~\ref{tab:performance}, since no hyperoptimization is used. WW, which uses only 10 trees here (default number of trees), is generally among the fastest algorithms, for performances comparable to that of all baselines (bold is for best EGB training time or performance, underline for best \texttt{RF}$n$, \texttt{WW}$n$ or \texttt{RD}$n$ training time or performance). Standard deviations are reported in Table~\ref{tab:fit_time_std}.}
\small
\begin{tabular}{l|cccc|cc|cccc|cc}
\toprule
 & \multicolumn{6}{c|}{Training time (seconds)} & \multicolumn{6}{c}{Test AUC} \\
 \cmidrule(l){2-7} \cmidrule(r){8-13}
 &  \texttt{XGB} & \texttt{LGBM} & \texttt{CB} &  \texttt{HGB} & \texttt{RF} &  \texttt{WW} & \texttt{XGB} & \texttt{LGBM} & \texttt{CB} &  \texttt{HGB} & \texttt{RF} &  \texttt{WW} \\
\midrule
covtype  &   10 &   \bestb{3} &   120 &  14 &    21 & \bestf{3}  &  0.986 &  0.978 &  \bestb{0.989} &  0.960 &  \bestf{0.998} &  0.979 \\
higgs    &   36 &  \bestb{30} &   653 &  85 &  1389 & \bestf{179}  &  0.823 &  0.812 &  \bestb{0.840} &  0.812 &  \bestf{0.838} &  0.813 \\
internet &  9 &   \bestb{4} &   188 &   8 &    0.4 &  \bestf{0.3} &  \bestb{0.918} &  0.828 &  0.910 &  0.500 &  0.862 &  \bestf{0.889} \\
kddcup   &  175 &  41 &  2193 &  \bestb{31} &   208 &   \bestf{12}  &  \bestb{1.000} &  0.638 &  0.988 &  0.740 &  0.998 &  \bestf{1.000} \\
kick     &  7 &   \bestb{0.4} &  50 &  0.7 & 31 & \bestf{5}   &  0.768 &  0.757 &  \bestb{0.781} &  0.773 &  0.747 &  \bestf{0.751} \\
\bottomrule
\end{tabular}
\label{tab:fit_time}
\end{table}
\subsection{Training time}
We provide a study of training times over an increasing fraction of the same datasets in Figure~\ref{fig:train_times}.
We report also in Table~\ref{tab:fit_time} (see also Table~\ref{tab:fit_time_std} for standard deviations) the test AUC scores obtained with default hyperparameters of all algorithms on the 5 largest considered datasets together with their training times (timings can vary by several orders of magnitude with varying hyperparameters for EGB libraries, as observed by the timing differences between Figure~\ref{fig:auc_timings} and Table~\ref{tab:fit_time}).
\begin{figure}[htbp]
    \centering
    \includegraphics[width=\textwidth]{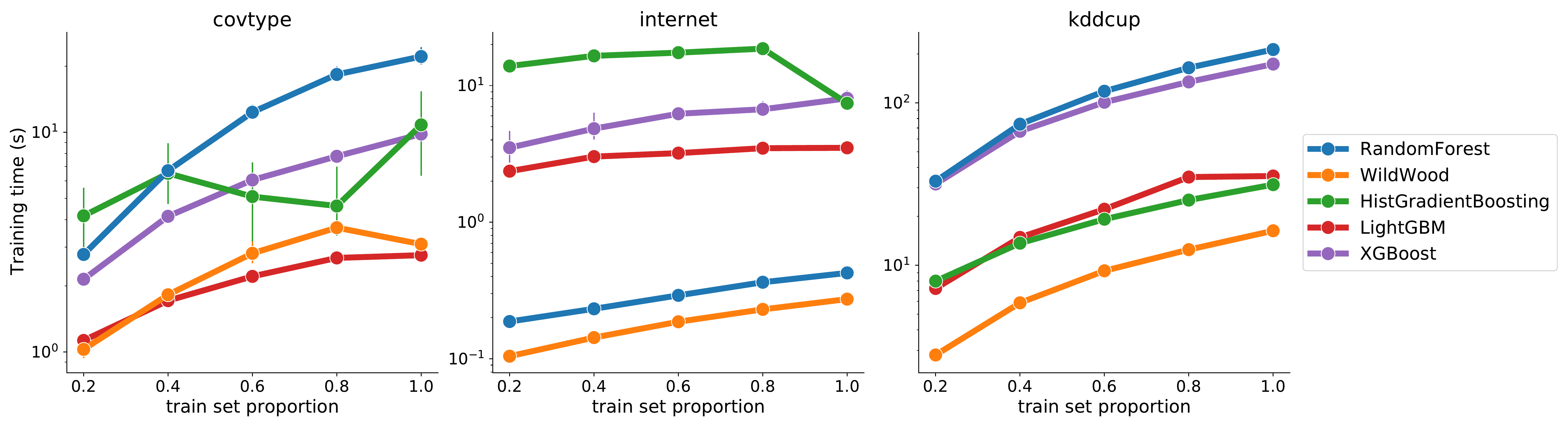}
    \caption{Training times on increasing fractions of a few large datasets for WildWood and the other baselines of Table~\ref{tab:fit_time} (except for CatBoost which is not competitive). The results are averaged over 5 runs using default parameters for each algorithm. WildWood clearly outpaces its competitors in almost all cases.}
    \label{fig:train_times}
\end{figure}
Figure~\ref{fig:train_times} further highlights the swiftness of WildWood's training procedure as compared to other models when trained on increasing fractions of the same dataset. 
Paradoxically, certain boosting algorithms appear to train faster on bigger data fractions, we attribute this to their stopping criteria on default parameters.

The results on both Tables~\ref{tab:performance} and~\ref{tab:fit_time} show that WW is competitive with respect to all baselines both in terms of performance and computational times: it manages to always reach at least comparable performance with the best algorithms despite only using $10$ trees as a default. 
In this respect, WW maintains high scores at a lower computational cost.

\subsection{Model size}
We also train random forest models with number of trees from one to ten (and default parameters for the rest) on a few datasets using random forest algorithms (WW and \texttt{scikit-learn}'s implementations of RF~\citep{louppe2014understanding} and Extra Trees~\citep{geurts2006extremely}). 
We plot the mean test AUC over 10 repetitions against model size measured through the associated pickle file in megabytes. The result is displayed on Figure~\ref{fig:modelsize_vs_perf} and demonstrates WildWood's ability to offer lightweight random forest models with improved performance. 
This is a valuable advantage for applications using learning algorithms with limited memory and computational resources. 
Notable examples include embedded systems~\citep{8972136, 8610075, 9046770} or IoT devices~\citep{yazici2018edge, domb2017lightweight, ren2020random}.

\begin{figure}[htbp]
    \centering
    \includegraphics[width=\textwidth]{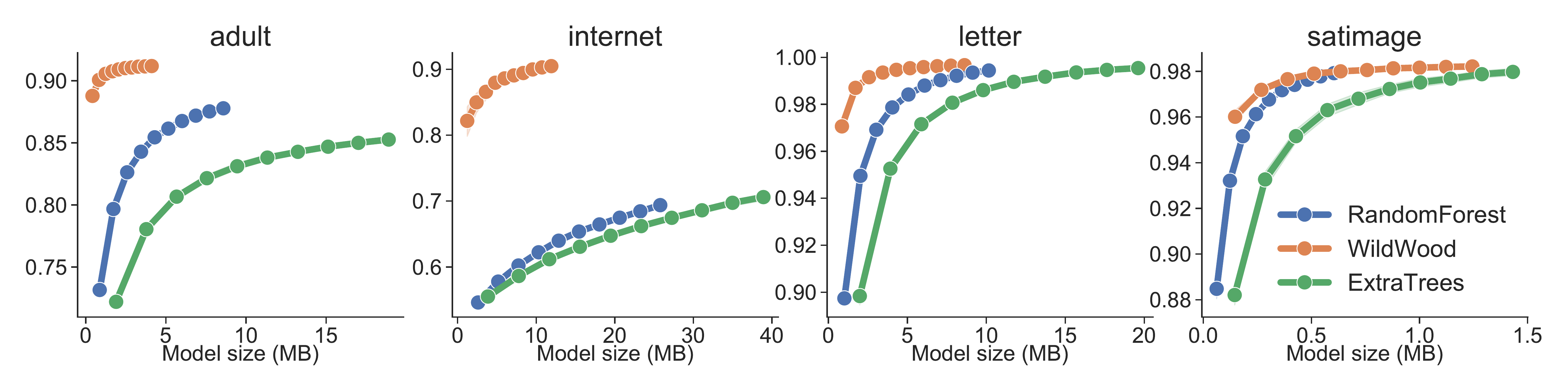}
    \caption{Mean test AUC and standard-deviations ($y$-axis) as a function of model size in megabytes ($x$-axis) using 10 train/test splits for WW and \texttt{scikit-learn}'s implementations of RF~\citep{louppe2014understanding} and Extra Trees~\citep{geurts2006extremely} on a few datasets. We use one to ten trees for each algorithm with default parameters. Wildwood is able to achieve better performance with smaller random forest models.}
    \label{fig:modelsize_vs_perf}
\end{figure}

\subsection{Regression experiment}
We run an elementary regression experiment comparing WW to RF and Extra Trees again. The task is to recover four noisy test signals: \textit{Doppler}, \textit{Heavisine}, \textit{Blocks} and \textit{Bumps} originally analyzed by~\cite{donoho1993nonlinear, donoho1994ideal}. We test different noise intensities measured through the signal-to-noise ratio (SNR) and show the results on Figure~\ref{fig:regression}. We observe that WildWood clearly outperforms its competitors on the signal recovery tasks at low SNR levels. This is thanks to the regularizing effect of aggregation making the algorithm more robust to noise. We note, nonetheless, that this may slightly degrade performance on relatively irregular signals at high SNR (\textit{Bumps}). 
\begin{figure}[htbp]
    \centering
    \includegraphics[width=\textwidth]{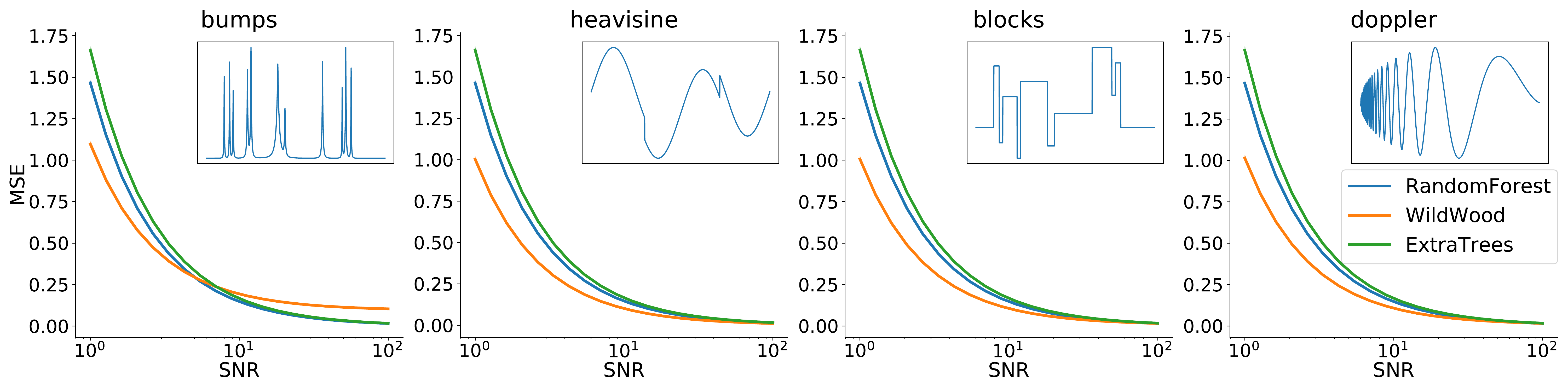}
    \caption{Averaged mean squared errors ($y$-axis) at increasing SNR levels ($x$-axis) over 10 repetitions of the noisy signal recovery task for WW and \texttt{scikit-learn}'s implementations of RF~\citep{louppe2014understanding} and Extra Trees~\citep{geurts2006extremely} on the four test signals \textit{Doppler}, \textit{Heavisine}, \textit{Blocks} and \textit{Bumps}. The noiseless signals are displayed in the inner frames. Each algorithm uses 100 trees and default values of the remaining parameters. Wildwood achieves smaller errors at low SNR levels thanks to its regularization effect.}
    \label{fig:regression}
\end{figure}

\subsection{Decision Boundaries}
Finally, we provide an additional illustration of the interpretability advantage of WW over RF similar to Figure~\ref{fig:aggregation_effect} but with real data. For this purpose, we plot the decision boundaries obtained on simple binary classification datasets for which good performance can be attained using only a pair of features. In each dataset, we choose such a pair among the most important features as measured by the Mean Decrease in Impurity (MDI) and fit WW and RF on the datasets restricted to these features. The results are shown on Figure~\ref{fig:aggregation_realdata}. As before, we observe that WW yields more regular decision boundaries making its model predictions less prone to overfitting and more interpretable.
\begin{figure}[htbp]
    \centering
    \includegraphics[width=\textwidth]{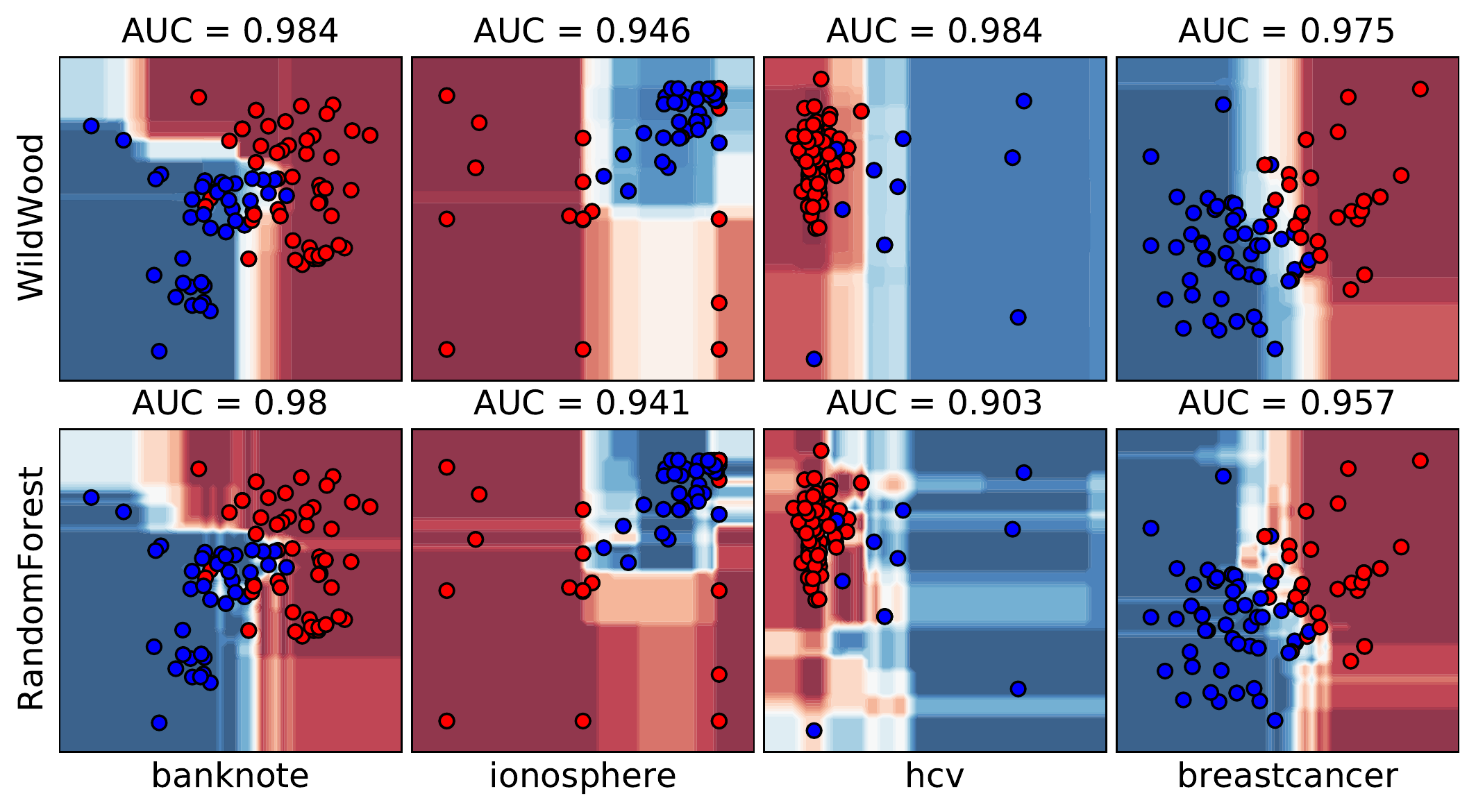}
    \caption{Illustration of the decision functions obtained with WW and RF using 10 trees each on simple binary classification datasets restricted to a pair of the most important features (according to MDI). The scattered samples represent the test set (colors indicate labels) while the decision boundaries use the train set. Thanks to aggregation, WW obtains more regular decision boundaries decreasing the risk of overfitting and improving interpretability..}
    \label{fig:aggregation_realdata}
\end{figure}

\section{Conclusion}
\label{sec:conclusion}

We introduced WildWood, a new Random Forest algorithm for batch supervised learning.
Tree predictions in WildWood are aggregation with exponential weights of the predictions of all subtrees, with weights computed on bootstrap out-of-bag samples. 
This leads to improved predictions in each individual tree, at a small computational cost, since WildWood's prediction complexity is similar to that of a standard Random Forest.
Moreover, thanks to the histogram strategy, WildWood's implementation is competitive with strong baselines including popular extreme boosting libraries, both in terms of performance and training times.
Note also that WildWood has few hyperparameters to tune and that the performances obtained with default hyperparameters are usually good enough in our experiments.

WildWood's implementation is still evolving and many improvements coming with future updates are planned, including the computation of feature importance, GPU training, distributed training (we only support single-machine training for now), among other enhancements that will further improve performances and accelerate computations.
Room for improvement in WildWood comes from the fact that the overall forest prediction is a simple arithmetic mean of each tree prediction, while we could perform also exponentially weighted aggregation between trees.
Future works include a WildWood-based implementation of isolation-forest~\citep{4781136}, using the same subtrees aggregation mechanism with the log loss for density estimation, to propose a new algorithm for outliers detection.

\paragraph{Acknowledgments.}

This research is supported by the Agence Nationale de la Recherche as part of the ``Investissements d'avenir'' program (reference ANR-19-P3IA-0001; PRAIRIE 3IA Institute).
Yiyang Yu is supported by grants from Région Ile-de-France.

\newpage
\section{Proof of Theorem~\ref{thm:ctw} and construction of Algorithms~\ref{alg:wbar-computation} and~\ref{alg:tree-prediction}}
\label{sec:proof-thm-ctw}

The expression in Equation~\eqref{eq:exact-aggregation} involves sums over all subtrees $T$ of the fully grown tree $\tree$ (involving an exponential in the number of leaves of $\tree$).
However, it can be computed efficiently because of the specific choice of the prior $\pi$.
More precisely, we will use the following lemma \citep[Lemma~1]{helmbold1997pruning} several times to efficiently compute sums of products.
Let us recall that $\nodes(\tree)$ stands for the set of nodes of~$\tree$.
\begin{lemma}
  \label{lem:ctw-sum-prod}
  Let $g: \nodes(\tree) \to \R$ be an arbitrary function and define $G: \nodes(\tree) \to \R$ as
  \begin{equation}
  \label{eq:ctw-sum-prod-def}
  G (\node) = \sum_{T \subset \tree_\node} 2^{- \| T \|} 
  \prod_{\node' \in \leaves(T)} g(\node'),
  \end{equation}
  where the sum over $T \subset \tree_\node$ means the sum over all subtrees $T$ of $\tree$ rooted at $\node$. 
  Then\textup, $G (\node)$ can be computed recursively as follows\textup:
  \begin{equation*}
    G(\node) = 
    \begin{cases}
      g(\node) & \text{ if } \node \in \leaves(\tree) \\
      \frac{1}{2} g(\node) + \frac{1}{2} G(\node 0) G(\node 1) & \mathrm{ otherwise,}
    \end{cases}
  \end{equation*}
  for each node $\node \in \nodes(\tree)$.
\end{lemma}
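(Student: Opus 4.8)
The plan is to prove the identity by structural induction on the subtree $\tree_\node$ of $\tree$ rooted at $\node$, working from the leaves of $\tree$ upwards --- precisely the traversal order exploited later in Algorithm~\ref{alg:wbar-computation}. For the base case $\node \in \leaves(\tree)$, the only subtree $T \subset \tree_\node$ rooted at $\node$ is the singleton $\{\node\}$; its unique node is a leaf of $T$ that is also a leaf of $\tree$, so $\|T\| = 1 - 1 = 0$ and the sum in~\eqref{eq:ctw-sum-prod-def} reduces to $g(\node)$, which matches the stated recursion.

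For the inductive step, suppose $\node \in \inodes(\tree)$ with children $\node 0, \node 1$ in $\tree$, and assume the formula holds at $\node 0$ and $\node 1$. I would partition the subtrees $T \subset \tree_\node$ rooted at $\node$ according to whether $\node$ is a leaf of $T$ or not. There is exactly one subtree of the first kind, $T = \{\node\}$; since $\node \notin \leaves(\tree)$, it has $\|T\| = 1$ and contributes $\tfrac12 g(\node)$. Every other $T$ has $\node \in \inodes(T)$, hence is obtained by gluing at $\node$ a subtree $T_0 \subset \tree_{\node 0}$ rooted at $\node 0$ and a subtree $T_1 \subset \tree_{\node 1}$ rooted at $\node 1$, and the correspondence $T \leftrightarrow (T_0, T_1)$ is a bijection. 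For such $T$ the leaves split as a disjoint union $\leaves(T) = \leaves(T_0) \cup \leaves(T_1)$, so the product over $\leaves(T)$ factorizes; moreover $|\nodes(T)| = 1 + |\nodes(T_0)| + |\nodes(T_1)|$ while $\node$, being interior in $T$, is not counted among the leaves of $T$ that are leaves of $\tree$, whence $\|T\| = 1 + \|T_0\| + \|T_1\|$. Substituting this and summing over all pairs $(T_0, T_1)$ turns the contribution of these $T$ into $\tfrac12\, G(\node 0)\, G(\node 1)$ by the induction hypothesis, and adding back the $\tfrac12 g(\node)$ term from the singleton subtree gives $G(\node) = \tfrac12 g(\node) + \tfrac12 G(\node 0) G(\node 1)$.

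The computation is essentially bookkeeping, and the only delicate point is the additive decomposition $\|T\| = 1 + \|T_0\| + \|T_1\|$: this is where the precise definition of $\|\cdot\|$ (``number of nodes minus number of leaves that are also leaves of $\tree$'') is used, the corrective term distributing cleanly over the two child subtrees precisely because a node that is further split inside $T$ is never counted. This is also the reason the prior $\pi(T) = 2^{-\|T\|}$ normalizes to one over subtrees of $\tree$, which is what makes the bijection-based recursion go through. I expect this identity, together with keeping the two cases ($\node$ a leaf of $T$ versus $\node$ interior in $T$) straight, to be the main thing to get right; the rest follows by distributing the sum over the product.
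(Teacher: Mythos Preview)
Your proof is correct and follows essentially the same approach as the paper: a structural induction that separates the singleton subtree $\{\node\}$ from the subtrees in which $\node$ is internal, uses the bijection with pairs $(T_0,T_1)$, and relies on the key additive identity $\|T\| = 1 + \|T_0\| + \|T_1\|$. Your write-up is in fact somewhat more detailed than the paper's (you spell out why $\|T\|=1$ for the singleton when $\node\in\inodes(\tree)$ and why the leaf sets split disjointly), and your aside about $\pi$ normalizing to one is a nice sanity check obtained by taking $g\equiv 1$ in the recursion.
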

For the sake of completeness, we include a proof of this statement.
\begin{proof}
First, let us notice that the case $\node \in \leaves(\tree)$ is straightforward since there is only one pruning $T$ of $\tree_\node$ which satisfies $\|T\| = 0$ (recall that $\|T\|$ is the number of internal nodes and leaves in $T$ \textit{minus} the number of leaves in $T$ that are also leaves of $\tree_\node$).
For the second case, we can expand $G(\node)$ by taking into account the pruning which only leaves $\node$ as a leaf, the rest of the prunings can be expressed through pairs of prunings $T_0$ and $T_1$ of $\tree_{\node 0}$ and $\tree_{\node 1}$ respectively. Moreover, it can be shown that such a pruning $T$ satisfies $\|T\| = 1 + \|T_0\| + \|T_1\|$, thus we get the following expansion :
\begin{align*}
    G(\node) &= \frac{1}{2}g(\node) + \sum_{T_0 \subset \tree_{\node 0}}\sum_{T_1 \subset \tree_{\node 1}} 2^{-(1 + \|T_0\| + \|T_1\|) }\prod_{\node' \in T_0} g(\node 0 \node') \prod_{\node{''} \in T_1} g(\node 1 \node{''}) \\
    &= \frac{1}{2}g(\node) + \frac{1}{2} \Big(\sum_{T_0 \subset \tree_{\node 0}} 2^{-\|T_0\| }\prod_{\node' \in T_0} g(\node 0 \node') \Big) \cdot \Big(\sum_{T_1 \subset \tree_{\node 1}} 2^{-\|T_1\| }\prod_{\node{''} \in T_1} g(\node 1 \node{''}) \Big) \\
    &= \frac{1}{2}g(\node) + \frac{1}{2} G(\node 0) G(\node 1).
\end{align*}
This concludes the proof of Lemma~\ref{lem:ctw-sum-prod}.
\end{proof}
Let us introduce $w_T = \pi(T) \exp(- \eta L_T)$ for any $T \subset \tree$,
so that Equation~\eqref{eq:exact-aggregation} writes
\begin{equation}
\label{eq:ewa-tree}
\wh f(x) = \frac{\sum_{T \subset \tree} w_T \pred_{T} (x)}{\sum_{T \subset \tree} w_T},
\end{equation}
where the sums hold over all the subtrees $T$ of $\tree$ rooted at $\root$ (the root of the full tree $\tree$).
We will show how to efficiently compute and update the numerator and denominator in Equation~\eqref{eq:ewa-tree}.
Note that $w_T$ may be written as
\begin{align}
    \nonumber
    w_T &= \pi(T) \exp(- \eta L_T) \\
    \nonumber
    &= 2^{-\| T \|} \exp \bigg(- \eta \sum_{i \in I_\otb} \ell (\pred_{\node_T(x_i)}, y_i) \bigg) \\
    \label{eq:weight-tree-decomp-1} 
    &= 2^{-\| T \|} \exp \bigg(- \eta \sum_{\node \in \leaves(T)}
    \sum_{i \in I_\otb \pp x_i \in C_\node} \ell (\pred_{\node_T(x_i)}, y_i) \bigg) \\
    \label{eq:weight-tree-decomp-2}
    &= 2^{-\| T \|} \exp \bigg(- \eta \sum_{\node \in \leaves(T)}
    \sum_{i \in I_\otb \pp x_i \in C_\node} \ell (\pred_{\node}, y_i) \bigg) \\
    \nonumber
    &= 2^{-\| T \|} \exp \bigg(- \eta \sum_{\node \in \leaves(T)} L_\node \bigg) \\
    \label{eq:weight-tree-decomp-3}
    &= 2^{-\| T \|} \prod_{\node \in \leaves(T)} w_{\node},
\end{align}
where we recall that
\begin{equation*}
 L_\node = \sum_{i \in I_\otb \pp x_i \in C_\node} \ell (\pred_{\node}, y_i)
 \quad \text{and} \quad w_\node = \exp(-\eta L_\node).
\end{equation*}
Equality~\eqref{eq:weight-tree-decomp-1} comes from the fact that the set of cells $\{C_\node : \node \in \leaves(T) \}$ is a partition of $C$ by construction, and that the stopping criterion used to build $\tree$ ensures that each leaf node in $\leaves(T)$ contains at least one sample from $I_\otb$ (see Section~\ref{sub:splits}).
Equality~\eqref{eq:weight-tree-decomp-2} comes from the fact that the prediction of a node is constant and equal to $\pred_\node$ for any $x \in C_\node$.

\paragraph{Denominator of Equation~\eqref{eq:ewa-tree}.}

For each node $\node \in \nodes(\tree)$, denote
\begin{equation}
\label{eq:avg-weights-def}
\wbar_{\node} = \sum_{T \subset \tree_\node} 2^{-\| T \|} \prod_{\node' \in \leaves (T)} w_{\node'},
\end{equation}
where once again the sum over $T \subset \tree_\node$ means the sum over all subtrees $T$ of $\tree$ rooted at $\node$.
We have that~\eqref{eq:weight-tree-decomp-3} entails
\begin{equation}
\label{eq:wbar-root}
\wbar_{\root} = \sum_{T \subset \tree_\root} 2^{-\| T \|} \prod_{\node \in \leaves (T)} w_{\node} = \sum_{T  \subset \tree_\root } w_T = \sum_{T  \subset \tree} w_T \, .
\end{equation}
So, we can compute recursively $\wbar_{\root}$ very efficiently, using a recursion on the weights $\wbar_{\node}$ using Lemma~\ref{lem:ctw-sum-prod} with $g(\node) = w_\node$.
This leads to the recursion stated in Theorem~\ref{thm:ctw}, see Equation~\eqref{eq:wden-recursion}. 

Now, we can exploit the fact that decision trees are built in a depth-first fashion in WildWood: all the nodes $\node \in \tree$ are stored in a ``flat'' array, and by construction both the child nodes $\node 0$ and $\node 1$ have indexes that are larger than the one of $\node$.
So, we can simply loop over the array of nodes in reverse order, and compute $\wbar_{\node} = w_\node$ if $\node \in \leaves(\tree)$ and  $\wbar_{\node} = \frac{1}{2} w_{\node} + \frac{1}{2} \wbar_{\node 0} \wbar_{\node 1}$ otherwise: we are guaranteed to have computed $\wbar_{\node 0}$ and $\wbar_{\node 1}$ before computing $\wbar_{\node}$.
This algorithm is described in Algorithm~\ref{alg:wbar-computation}.
Since these computations involve a large number of products with exponentiated numbers, it typically leads to strong over- and under-flows: we describe in Algorithm~\ref{alg:wbar-computation} a version of this algorithm which works recursively over the logarithms of the weights.
At the end of this loop, we end up at $\node = \root$ and have computed $\wbar_{\root} = \sum_{T  \subset \tree} w_T$ with a very efficient $O(|\nodes(\tree)|)$ complexity.
Note also that it is sufficient to store both $w_\node$ and $\wbar_\node$ for all $\node \in \tree$, which makes for a $O(|\nodes(\tree)|)$ memory consumption.

\paragraph{Numerator of Equation~\eqref{eq:ewa-tree}.}

The numerator of Equation~\eqref{eq:ewa-tree} almost follows the exact same argument as the denominator, but since it depends on the input vector $x \in \cell$ of features for which we want to produce a prediction, it is performed at inference time.
Recall that $\pathpoint(x)$ is the sequence of nodes that leads to the leaf $\node_{\tree}(x)$ containing $x \in \cell$ and define, for any $\node \in \nodes(\tree)$, $\wpred_{\node}(x) = w_\node \pred_\node(x)$ if $\node \in \pathpoint(x)$, and $\wpred_{\node}(x) = w_\node$ otherwise.
We have
\begin{align}
    \nonumber
    \sum_{T \subset \tree} w_T \pred_{T} (x) &= \sum_{T \subset \tree_\root} w_{T} \pred_{\node_T(x)} \\
    \label{eq:denom-argument-1}
    &=  \sum_{T \subset \tree_\root} 2^{-\| T \|} \prod_{\node \in \leaves(T)} w_{\node} \pred_{\node_T(x)} \\
    \label{eq:denom-argument-2}
    &= \sum_{T \subset \tree_\root} 2^{-\| T \|} \prod_{\node \in \leaves(T)} \wpred_{\node}(x).
\end{align}
Note that~\eqref{eq:denom-argument-1} comes from~\eqref{eq:weight-tree-decomp-3} while~\eqref{eq:denom-argument-2} comes from the definition of $\wpred_{\node}(x)$ (note that a single term from the product over $\node \in \leaves(T)$ corresponds to $\node = \node_T(x)$ since $\{ C_\node : \node \in \leaves(T) \}$ is a partition of $\cell$).
We are now in position to use again Lemma~\ref{lem:ctw-sum-prod} with $g(\node) = \wpred_{\node}(x)$.
Defining
\begin{equation*}
\wnum_\node(x)
= \sum_{T \subset \tree_\node} 2^{-\| T \|}
\prod_{\node' \in \leaves(T)} \wpred_{\node'}(x) \, ,
\end{equation*}
we can conclude that 
\begin{equation}
    \label{eq:wnum-root}
    \wnum_\root(x) = \sum_{T \subset \tree} w_T \pred_{T} (x)
\end{equation}
and that the following recurrence holds:
\begin{equation}
  \label{eq:wnum-recursion1}
  \wnum_{\node}(x) =
  \begin{cases}
      \wpred_{\node}(x) & \text{ if } \node \in \leaves(\tree) \\
      \frac{1}{2} \wpred_{\node}(x) + \frac{1}{2} \wnum_{\node 0}(x) \wnum_{\node 1}(x) &\text{ otherwise}.
  \end{cases}
\end{equation}
This recurrence allows to compute $\wnum_{\node}(x)$ from $\wpred_{\node}(x)$, but note that a direct use of this formula would lead to a complexity $O(|\nodes(\tree)|)$ to produce a prediction for a single input $x \in \cell$. 
It turns out can we can do much better than that.

Indeed, whenever $\node \notin \pathpoint(x)$, we have by definition that $\wpred_{\node}(x) = w_{\node}$ and that $\wpred_{\node'}(x) = w_{\node'}$ for any descendant $\node'$ of $\node$, which entails by induction that $\wnum_{\node}(x) = \wbar_\node$ for any $\node \notin \pathpoint(x)$.
Therefore, we only need to explain how to compute $\wnum_{\node}(x)$ for $\node \in \pathpoint(x)$.
This is achieved recursively, thanks to~\eqref{eq:wnum-recursion1}, starting at the leaf $\node_\tree(x)$ and going up in the tree to $\root$:
\begin{equation}
    \label{eq:wnum-recursion2}
  \wnum_{\node}(x) =
  \begin{cases}
      w_{\node} \pred_{\node} & \text{ if } \node = \node_\tree (x) \\
      \frac{1}{2} w_{\node} \pred_{\node} + \frac{1}{2} \wbar_{\node (1-a)} \wnum_{\node a}(x) &\text{ otherwise, where } a \in \{ 0, 1 \} \text{ is s.t. } \node a \in \pathpoint(x).
  \end{cases}
\end{equation}
Let us explain where this comes from: firstly, one has obviously that $\leaves(\tree) \cap \pathpoint(x) = \{ \node_\tree(x) \}$, so that $\wnum_{\node}(x) = g(\node) = \wpred_{\node}(x) = w_\node \pred_\node(x)$ for $\node = \node_\tree (x)$.
Secondly, we go up in the tree along $\pathpoint(x)$ and use again~\eqref{eq:wnum-recursion1}: whenever $\node \in \inodes(\tree)$ and $\node a \in \pathpoint(x)$ for $a \in \{0, 1\}$, we have $\wnum_{\node (1 - a)}(x) = \wbar_{\node (1 - a)}$ since $\node (1 - a) \notin \pathpoint(x)$. 
This recursion has a complexity $O(|\pathpoint(x)|)$ where $|\pathpoint(x)|$ is the number of nodes in $\pathpoint(x)$, and is typically orders of magnitude smaller than $|\nodes(\tree)|$ (in a well-balanced binary tree, one has the relation $|\pathpoint(x)| = O(\log_2(|\nodes(\tree)|))$).
Moreover, we observe that the recursions used in~\eqref{eq:wden-recursion} and~\eqref{eq:wnum-recursion2} only need to save both $w_\node$ and $\wbar_\node$ for any $\node \in \nodes(\tree)$.

Finally, we have using~\eqref{eq:wbar-root} and~\eqref{eq:wnum-root} that
\begin{equation*}
    \wh f(x) = \frac{\sum_{T \subset \tree} w_T \pred_{T} (x)}{\sum_{T \subset \tree} w_T} = \frac{\wnum_\root(x)}{\wbar_\root} =: \wh f_{\root}(x),
\end{equation*}
and we want to compute $\wh f_{\root}(x)$ recursively from $\wh f_{\node}(x)$ where $\node \in \pathpoint(x)$.
First, whenever $\node = \node_\tree (x)$ we have
\begin{equation*}
    \wh f_\node(x) = \frac{\wnum_\node(x)}{\wbar_\node} = \frac{w_{\node} \pred_{\node}}{w_{\node}} = \pred_{\node},
\end{equation*}
while for $\node \neq  \node_\tree (x)$ and $\node \in \pathpoint(x)$, we write
\begin{align}
    \label{eq:fnode-computation1}
    \wh f_\node(x) = \frac{\wnum_\node(x)}{\wbar_\node}
    &= \frac{\frac{1}{2} w_{\node} \pred_{\node} + \frac{1}{2} \wbar_{\node (1-a)} \wnum_{\node a}(x)}{\wbar_\node} \\
    \label{eq:fnode-computation2}
    &= \frac 12 \frac{w_{\node}}{\wbar_\node} \pred_{\node} 
    + \frac 12 \frac{\wbar_{\node (1-a)} \wbar_{\node a}} {\wbar_\node} \frac{\wnum_{\node a}(x)}{\wbar_{\node a}} \\
    \label{eq:fnode-computation3}
    &= \frac 12 \frac{w_{\node}}{\wbar_\node} \pred_{\node} + \Big(1 - \frac 12  \frac{w_{\node}}{\wbar_\node} \Big) \wh f_{\node a}(x),
\end{align}
where~\eqref{eq:fnode-computation1} comes from~\eqref{eq:wnum-recursion2} while \eqref{eq:fnode-computation3} comes from~\eqref{eq:wden-recursion}.
This proves the recursion stated in Equation~\eqref{eq:f_pred_recursion} from Theorem~\ref{thm:ctw}, and to Algorithm~\ref{alg:tree-prediction}.
This concludes the proof of Theorem~\ref{thm:ctw}. $\hfill \square$

\section{Proofs of the results from Section~\ref{sec:theory}}
\label{sec:proofs-oracle}

The proof of Theorem~\ref{thm:subtree-bound} is partly inspired from the proof of Theorem~2 in~\cite{dalalyan2008aggregation}, that we generalize to exp-concave losses, while only least-squares regression is considered therein.
Let $E$ be a measurable space and $P, Q$ be probability measures on it.
The Kullback-Leibler divergence between $P$ and $Q$ is defined by
\begin{equation*}
    \kl(P, Q) = \int_E \log \Big( \frac{\d P}{\d Q} \Big) \d P
\end{equation*}
whenever $P$ is absolutely continuous with respect to $Q$ and equal to $+\infty$ otherwise.
Also, if $h : E \rightarrow \R$ is a measurable function such that $\int_E h \d P$ is well-defined on $\R \cup \{ -\infty, +\infty \}$, we introduce
\begin{equation*}
    P_h := \frac{h}{\int_E h \d P} \cdot P,
\end{equation*}
the probability measure on $E$ with density $h / \int h \d P$ with respect to $P$.
A classical result is the Donsker-Varadhan variational formula~\citep{donsker1976asymptotic}, which is at the core of the proofs of many PAC-Bayesian theorems~\citep{mcallester1999pacbayes,catoni2007pacbayes} and that we use here as well in the proof of Theorem~\ref{thm:subtree-bound}.
It states that
\begin{equation}
    \label{eq:donsker-varadhan}
    \log \Big( \int_E \exp(h) \d Q \Big) + \kl(P, Q) - \int h \d P = \kl(P, Q_{\exp(h)})
\end{equation}
holds for any probability measures $P$ and $Q$ on $E$ and any measurable function $h : E \rightarrow \R$.
This entails in particular that
\begin{equation*}
    \log \Big( \int_E \exp(h) \d Q \Big) = \sup_{P} \Big\{ \int h \d P - \kl(P, Q) \Big\},
\end{equation*}
where the supremum is over all probability measures on $E$, and where the supremum is achieved for $P = Q_{\exp(h)}$ whenever the term on the left-hand side is finite.
\begin{proof}[Proof of Theorem~\ref{thm:subtree-bound}]
Recall that the \otb loss of a subtree $T \subset \tree$ is given by
\begin{equation*}
    L_T = \sum_{i \in I_\otb} \loss (\pred_{T} (x_i), y_i)
\end{equation*}
and let us introduce
\begin{equation}
    \label{eq:p_T_def}
    p_T = \frac{\pi(T)\exp(-\eta L_T)}{\sum_{T'} \pi(T')\exp(-\eta L_{T'})}
\end{equation}
for any subtree $T \subset \tree$.
First, we use the fact that $\ell$ is a $\eta$-exp-concave loss function, hence $\eta$-mixable, see Section~3.3 from~\cite{cesabianchi2006plg}, which entails, since $p_T$ is a probability measure over the set of all subtrees $T \subset \tree$, that
\begin{equation*}
   \loss \Big(\sum_T p_T \wh y_T(x_i), y_i \Big) \leq -\frac{1}{\eta} \log \Big( \sum_T p_T \exp(-\eta \loss(\wh y_T(x_i), y_i)) \Big),
\end{equation*}
where the sums over $T$ are over all subtrees $T \subset \tree$.
Now, summing this inequality over $i \in I_\otb$ and using the convexity of the log-sum-exp function leads to
\begin{align*}
\sum_{i \in I_\otb} \loss \Big(\sum_T p_T \wh y_T(x_i), y_i \Big) &\leq -\frac{1}{\eta} \sum_{i \in I_\otb} \log \Big( \sum_T p_T \exp(-\eta \loss(\wh y_T(x_i), y_i)) \Big) \\
&\leq -\frac{n_{\otb}}{\eta} \log \Big( \sum_T p_T \exp \Big(- \frac{\eta}{n_{\otb}} \sum_{i \in I_\otb} \loss(\wh y_T(x_i), y_i) \Big) \Big) \\
&= -\frac{n_{\otb}}{\eta} \log \Big( \sum_T p_T \exp \Big(- \frac{\eta}{n_{\otb}} L_T \Big) \Big).
\end{align*}
By plugging the definition of $p_T$ into the previous expression, and by introducing $\rho(T) := \eta L_T / n_\otb$, we obtain
\begin{align*}
\frac{S}{n_\otb} &:= \frac{1}{n_\otb} \sum_{i \in I_\otb} \loss(\wh f(x_i), y_i) \\
&\leq - \frac{1}{\eta} \log \Big( \sum_T \pi(T) \exp \big( -(n_\otb + 1) \rho(T) \big) \Big) + \frac{1}{\eta} \log \Big( \sum_T \pi(T) \exp \big( -n_\otb \rho(T) \big) \Big).
\end{align*}
The H\"older inequality implies that
\begin{equation*}
    \sum_T \pi(T) \exp \big( -n_\otb \rho(T) \big) \leq \Big(\sum_T \pi(T) \exp \big( -(n_\otb + 1) \rho(T) \big) \Big)^{n_\otb / (n_\otb + 1)},
\end{equation*}
thus
\begin{equation*}
   \frac{S}{n_\otb} \leq - \frac{1}{\eta(n_\otb + 1)} \log \Big( \sum_T \pi(T) \exp \big( -(n_\otb + 1) \rho(T)\big) \Big).
\end{equation*}
Using~\eqref{eq:donsker-varadhan} with $h(T) = -(n_\otb + 1) \rho(T)$ and $Q = \pi$, we have 
\begin{align*}
\log \Big( \sum_T \pi(T) \exp &\big(-(n_\otb + 1) \rho(T) \big) \Big) \\
&= - \sum_T P(T) (n_\otb + 1) \rho(T) - \kl(P, \pi) + \kl(P, \pi_{\exp(h)})
\end{align*} 
for any probability measure $P$ over the set of subtrees of $\tree$.
This leads to
\begin{equation*}
    \frac{1}{n_\otb} \sum_{i \in I_\otb} \loss(\wh f(x_i), y_i) \leq \frac{1}{n_\otb} \sum_T P(T) L_T + \frac{1}{\eta(n_\otb + 1)} \kl(P, \pi) 
\end{equation*}
for any $P$, since $\kl(P, \pi_{\exp(h)}) \geq 0$. So for the particular choice $P = \delta_T$ (the Dirac mass at $T$) for any subtree $T \subset \tree$, we have
\begin{align*}
\frac{1}{n_\otb} \sum_{i \in I_\otb} \loss(\wh f(x_i), y_i) &\leq 
\frac{1}{n_\otb} L_T + \frac{1}{\eta (n_\otb + 1)} \log\big( \pi(T)^{-1} \big) \\
&\leq \frac{1}{n_\otb} L_T + \frac{\log 2}{\eta} \frac{\|T\|}{n_\otb + 1},
\end{align*}
which concludes the proof of Theorem~\ref{thm:subtree-bound}.
\end{proof}
The proof of Corollary~\ref{cor:bound-best-pruning-log} requires the next Lemma.
\begin{lemma}
\label{lem:analog-lem5}
Consider classification with $\cY = \{ 1, \ldots, K \}$ and a node $\node \in \nodes(\tree)$. 
Denote $n_{\node}(k)$ the number of samples of class $k$ in node $\node$. 
We consider the Krichevsky-Trofimov estimator 
\begin{equation*}
    \widehat{y}(k) = \frac{n_{\node}(k) + 1/2}{n_{\node} + K/2}   
\end{equation*}
where $n_{\node} = \sum_{k=1}^K n_{\node}(k)$ and the log loss $\ell(y', y) = -\log y'(y)$.
Then, the inequality
\begin{equation*}
    \sum_i \ell(\widehat{y}, y_i) - \inf_p \sum_i \ell(p, y_i) \leq \frac{K-1}{2} 
\end{equation*}
holds.
\end{lemma}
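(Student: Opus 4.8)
The plan is to reduce the claimed inequality to an explicit estimate on a Kullback--Leibler divergence between two distributions on $\cY = \{1,\dots,K\}$. Writing $n_k := n_{\node}(k)$ and $N := n_{\node} = \sum_k n_k$, one has $\sum_i \ell(\widehat y, y_i) = -\sum_k n_k \log \widehat y(k)$, while $\inf_p \sum_i \ell(p, y_i) = -\sum_k n_k \log (n_k/N)$, the infimum being the value at the empirical distribution (with the convention $0\log 0 = 0$, so that terms with $n_k = 0$ drop out). Subtracting, the quantity to bound is $N$ times a Kullback--Leibler divergence, namely
\begin{equation*}
\Phi := \sum_{k} n_k \log \frac{n_k}{n_k + 1/2} + N \log \frac{N + K/2}{N},
\end{equation*}
where the first sum effectively runs over $J := \{ k : n_k \ge 1 \}$. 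First I would dispose of the case $|J| = 1$, where $\Phi = N\log\frac{N + K/2}{N + 1/2} = N\log\bigl(1 + \tfrac{(K-1)/2}{N+1/2}\bigr) \le \tfrac{(K-1)}{2} \cdot \tfrac{N}{N + 1/2} < \tfrac{K-1}{2}$ by $\log(1+x)\le x$.

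For the general case the key point is that the crude bound $N\log(1 + \tfrac{K}{2N}) \le K/2$ is too lossy when $N$ is small compared with $K$, so I would first peel off the empty classes by splitting
\begin{equation*}
N\log\frac{N+K/2}{N} = N\log\frac{N+|J|/2}{N} + B, \qquad B := N\log\frac{N+K/2}{N+|J|/2},
\end{equation*}
so that $\Phi = A + B$, where $A := \sum_{k\in J} n_k\log\frac{n_k}{n_k+1/2} + N\log\frac{N+|J|/2}{N}$ is exactly the analogous quantity for a $|J|$-class problem with all classes non-empty, and where $B \le N\cdot\tfrac{(K-|J|)/2}{N+|J|/2} \le \tfrac{K-|J|}{2}$ again by $\log(1+x)\le x$.

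It then remains to show $A \le \tfrac{|J|-1}{2}$; since $|J| \ge 2$ here, I would use the elementary inequalities $\log(1+x) \ge x - x^2/2$ for $x \ge 0$ (immediate from $g(x) := \log(1+x) - x + x^2/2$ satisfying $g(0)=0$ and $g'(x)=\tfrac{x^2}{1+x}\ge 0$) and $\log(1+x)\le x$. Applying the former with $x = \tfrac{1}{2 n_k}$ gives $n_k \log\frac{n_k}{n_k+1/2} \le -\tfrac12 + \tfrac{1}{8 n_k} \le -\tfrac 38$ for every $k \in J$, hence $\sum_{k\in J} n_k\log\frac{n_k}{n_k+1/2} \le -\tfrac{3|J|}{8}$; applying the latter gives $N\log\frac{N+|J|/2}{N}\le \tfrac{|J|}{2}$; adding these, $A \le \tfrac{|J|}{8} \le \tfrac{|J|-1}{2}$ because $|J| \ge 2$ forces $4|J| \ge |J| + 4$. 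Combining everything, $\Phi = A + B \le \tfrac{|J|-1}{2} + \tfrac{K-|J|}{2} = \tfrac{K-1}{2}$, which is the claim. The only genuinely delicate point is making the constants line up in the bound on $A$ — in particular, recognizing that one must separate out the empty classes (the step producing $B$) before applying the crude estimate $\log(1+x)\le x$, since otherwise the slack incurred by small nodes with many classes is uncontrolled; the rest is routine.
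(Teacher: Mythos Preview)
Your proof is correct, but it proceeds quite differently from the paper's. After arriving at the same expression
\[
\Phi = N\log\frac{N+K/2}{N} + \sum_k n_k \log\frac{n_k}{n_k+1/2},
\]
the paper does not split off empty classes or use any second-order Taylor bound. Instead it applies Jensen's inequality to the concave logarithm,
\[
\sum_k n_k \log\frac{n_k}{n_k+1/2} \le N \log\Big(\sum_k \frac{n_k}{N}\cdot\frac{n_k}{n_k+1/2}\Big),
\]
and then uses that $x\mapsto x/(x+1/2)$ is nondecreasing with $n_k\le N$ to bound the inner average by $N/(N+1/2)$. This collapses $\Phi$ directly to $N\log\frac{N+K/2}{N+1/2} = N\log\bigl(1+\tfrac{K-1}{2N+1}\bigr)\le \tfrac{K-1}{2}$, with no case analysis on $|J|$ and no need to peel off empty classes. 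Your route is more hands-on: the $A+B$ decomposition together with $\log(1+x)\ge x-x^2/2$ gives a fully elementary argument that avoids Jensen, at the price of the extra bookkeeping on $|J|$ and the somewhat ad hoc numerology $|J|/8\le (|J|-1)/2$. Both get the sharp constant $(K-1)/2$; the paper's argument is shorter and exposes more clearly \emph{why} the empty classes cause no trouble (they simply do not appear in the Jensen step), whereas yours makes the slack at each nonempty class explicit.
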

\begin{proof}
We know that the optimal $p$ is given by $p_k = n_{\node}(k) / n_{\node}$. 
Indeed, it is the solution to the following constrained convex optimization problem
\begin{align*}
    \min_p \; -\sum_{k=1}^K n_{\node}(k) \log p_k \quad \text{ subject to } \quad \sum_{k=1}^K p_k = 1,
\end{align*}
where we consider non-negativity to be already enforced by the objective and imposing $p_k \leq 1$ is redundant with the constraint $\sum_{k=1}^K p_k = 1$.
We can write the Lagrangian function as
\begin{equation*}
  L(p, \lambda) = -\sum_{k=1}^K n_{\node}(k) \log p_k + \lambda \Big( \sum_{k=1}^K p_k - 1 \Big)
\end{equation*}
and one can check the KKT conditions when taking $p_k = n_{\node}(k) / n_{\node}$ and $\lambda = n_{\node}$. 
Since we are dealing with a convex problem with linear constraints, this is a sufficient optimality condition.
Straightforward computations give
\begin{align*}
\sum_i \ell(\widehat{y}, y_i) - \inf_p \sum_i \ell(p, y_i) &= \sum_{k=1}^K - n_{\node}(k) \log(\widehat{y}(k)) - \sum_{k=1}^K - n_{\node}(k) \log p_k \\
&=\sum_{k=1}^K - n_{\node}(k) \log\frac{n_{\node}(k) + 1/2}{n_{\node} + K/2} - \sum_{k=1}^K - n_{\node}(k) \log\frac{n_{\node}(k)}{n_{\node}} \\
&=\sum_{k=1}^K - n_{\node}(k) \Big( \log\frac{n_{\node}}{n_{\node} + K/2} + \log\frac{n_{\node}(k) + 1/2}{n_{\node}} - \log\frac{n_{\node}(k)}{n_{\node}} \Big) \\
&= - n_{\node} \log\frac{n_{\node}}{n_{\node} + K/2} + \sum_{k=1}^K - n_{\node}(k) \Big( \log\frac{n_{\node}(k) + 1/2}{n_{\node}} -  \log\frac{n_{\node}(k)}{n_{\node}} \Big) \\
&= n_{\node} \log\frac{n_{\node} + K/2}{n_{\node}} + \sum_{k=1}^K n_{\node}(k) \log\frac{n_{\node}(k)}{n_{\node}(k) + 1/2}.
\end{align*}
Now, using the concavity of the logarithm gives
\begin{equation*}
\sum_i \ell(\widehat{y}, y_i) - \inf_p \sum_i \ell(p, y_i) \leq n_{\node} \log\frac{n_{\node} + K/2}{n_{\node}} + n_{\node} \log \Big( \sum_{k=1}^K \frac{n_{\node}(k)}{n_{\node} } \frac{n_{\node}(k)}{n_{\node}(k) + 1/2} \Big),
\end{equation*}
and the fact that $x \mapsto x / (x + 1/2)$ is non-decreasing and $n_\node(k) \leq n_\node$ leads to
\begin{align*}
\sum_i \ell(\widehat{y}, y_i) - \inf_p \sum_i \ell(p, y_i) &\leq n_{\node} \log\frac{n_{\node} + K/2}{n_{\node}} + n_{\node} \log \frac{n_{\node}}{n_{\node} + 1/2} \\
&= n_{\node} \log\frac{n_{\node} + 1/2 + (K-1)/2}{n_{\node} + 1/2} \\
&= n_{\node} \log\Big(1 + \frac{K-1}{2 n_{\node} + 1} \Big) \leq \frac{K-1}{2}.
\end{align*}
This concludes the proof of Lemma~\ref{lem:analog-lem5}.
\end{proof}

\begin{proof}[Proof of Corollary~\ref{cor:bound-best-pruning-log}]
The log-loss is trivially $1$-exp-concave, so that we can choose $\eta = 1$.
Following Theorem~\ref{thm:subtree-bound}, it remains to bound the regret of the tree forecaster $T$ with respect to the optimal labeling of its leaves.
For classification and the log loss, we use Lemma~\ref{lem:analog-lem5} to obtain
\begin{equation*}
   \sum_{i \in I_\otb: x_i \in \cell_\leaf} \loss(\pred_T(x_i), y_i) - \inf_p \sum_{i \in I_\otb: x_i \in \cell_\leaf} \loss(p, y_i) \leq \frac{K - 1}{2}
\end{equation*}
for any subtree $T$ and any $\leaf \in \leaves(T)$.
Now, summing over $\leaf \in \leaves(T)$, of cardinality $(\|T\| + 1) / 2$, leads to
\begin{equation*}
   \sum_{i \in I_\otb} \loss(\widehat f (x_i), y_i) - \sum_{i \in I_\otb} \loss(g_T(x_i), y_i) \leq \|T\| \log 2 + \frac{(K-1)(\|T\| + 1)}{4},
\end{equation*}
which concludes the proof of Corollary~\ref{cor:bound-best-pruning-log}.
\end{proof}

\begin{proof}[Proof of Corollary~\ref{cor:bound-best-pruning-square}]
The square loss is $1/(8B^2)$-exp-concave on $[-B,B]$, see~\cite{cesabianchi2006plg}, so we can choose $\eta = 1/(8B^2)$.
Following Theorem~\ref{thm:subtree-bound},
it remains to bound the regret of the tree forecaster $T$ with respect to the optimal labeling of its leaves.
For regression with the least-squares loss, since we use the empirical mean forecaster~\eqref{eq:reg-predictor}, we have
\begin{equation*}
   \sum_{i \in I_\otb: x_i \in \cell_\leaf} \loss(\pred_T(x_i), y_i) - \inf_b \sum_{i \in I_\otb: x_i \in \cell_\leaf} \loss(b, y_i) = 0
\end{equation*}
for any subtree $T$ and any leaf $\leaf \in \leaves(T)$.
The rest of the proof follows that of Corollary~\ref{cor:bound-best-pruning-log}.
\end{proof}

\section{Supplementary details on experiments}

We report in Table~\ref{tab:performance-with-std} the same test AUC scores as in Table~\ref{tab:performance} of all algorithms after hyperoptimization on the considered datasets. Standard-deviations displayed between parentheses are computed from 5 trainings with different random seeds. We observe that WW has better (or identical in some cases) performances than RF on all datasets and that it is close to that of EGB libraries (bold is for best EGB performance, underline for best \texttt{RF}$n$, \texttt{WW}$n$ or \texttt{RD}$n$ performance). Table~\ref{tab:performance-logloss} displays the results of the same experiment measured using the log loss.

We report also in Table~\ref{tab:fit_time_std} the same training time and test AUC as in Table~\ref{tab:fit_time}, with standard-deviations displayed between parentheses computed from 5 trainings with different random seeds, all with default hyperparameters of each algorithm. 
We observe that WW is generally among the fastest algorithms, for performances comparable to ones of all baselines (bold is for best EGB training time or performance, underline for best \texttt{RF} or \texttt{WW} training time or performance).

\begin{table}[htbp]
\caption{The same test AUC scores as in Table~\ref{tab:performance} of all algorithms after hyperoptimization on the considered datasets. Standard-deviations displayed between parentheses are computed from 5 trainings with different random seeds. We observe that WW has better (or identical in some cases) performances than RF on all datasets and that it is close to that of EGB libraries (bold is for best EGB performance, underline for best \texttt{RF}$n$, \texttt{WW}$n$ or \texttt{RD}$n$ performance).
}
\centering
\footnotesize
\begin{sideways}
\resizebox{0.9\textheight}{!}{\begin{tabular}{l|cccc|cccccc}
\toprule
   &  \texttt{XGB} & \texttt{LGBM} & \texttt{CB} &    \texttt{HGB} & \texttt{RF}$10$ &  \texttt{RF}$100$ & \texttt{WW}$10$ & \texttt{WW}$100$ & \texttt{RD}$10$ & \texttt{RD}$100$ \\
\midrule
adult        &  0.930 (2.7e-04) &  \bestb{0.931} (1.2e-04) &  0.927 (2.9e-04) &  0.930 (2.9e-04) &  0.915(5.0e-04) & 0.918(1.7e-04) & 0.916(4.1e-04) & \bestf{0.919}(1.4e-04) & 0.915(3.4e-04) & 0.917(1.5e-04) \\
bank         &  0.933 (1.5e-04) &  \bestb{0.935} (4.1e-05) &  0.925 (6.5e-04) &  0.930 (7.4e-04) &  0.919(6.0e-04) & 0.929(2.1e-04) & 0.926(3.8e-04) & \bestf{0.931}(1.6e-04) & 0.919(9.0e-04) & 0.922(2.2e-04) \\
breastcancer &  0.991 (4.4e-04) &  0.993 (1.1e-04) &  0.987 (6.7e-03) &  \bestb{0.994} (0.0e+00) &  0.987(2.1e-03) & \bestf{0.992}(4.1e-04) & 0.989(2.6e-03) & \bestf{0.992}(3.1e-04) & 0.987(2.9e-03) & 0.985(8.3e-04)  \\
car          &  0.999 (2.3e-04) &  \bestb{1.000} (3.8e-05) &  \bestb{1.000} (6.0e-05) &  \bestb{1.000} (0.0e+00) &  0.997(1.1e-03) & \bestf{0.998}(2.8e-04) & \bestf{0.998}(6.3e-04) & \bestf{0.998}(1.5e-04) & 0.993(8.7e-04) & 0.993(1.4e-04) \\
covtype      &  \bestb{0.999} (7.5e-06) &  \bestb{0.999} (3.6e-06) &  0.998 (2.5e-05) &  \bestb{0.999} (5.2e-06) &  0.997(1.4e-04) & \bestf{0.998}(1.4e-04) & 0.996(1.9e-04) & \bestf{0.998}(3.7e-05) & 0.974(7.6e-04) & 0.996(4.2e-05) \\
default-cb   &  0.780 (3.5e-04) &  \bestb{0.783} (1.2e-04) &  0.780 (3.9e-04) &  0.779 (6.7e-04) &  0.765(3.9e-03) & 0.775(7.4e-04) & 0.774(4.4e-04) & \bestf{0.778}(1.6e-03) & 0.772(9.0e-04) & 0.773(3.5e-04) \\
higgs        &  0.853 (6.7e-05) &  \bestb{0.857} (1.8e-05) &  0.847 (2.2e-05) & 0.853 (7.0e-05) &  0.820(9.9e-05) & \bestf{0.837}(4.5e-05) & 0.820(1.5e-04) & \bestf{0.837}(6.4e-05) & 0.813(1.1e-04) & 0.815(5.3e-05) \\
internet     &  0.934 (2.9e-04) &  0.910 (1.8e-04) &  \bestb{0.938} (1.3e-03) &  0.911 (1.1e-16) &  0.917(2.3e-03) & \bestf{0.935}(6.1e-04) & 0.926(1.5e-03) & 0.928(7.9e-04) & 0.925(4.5e-04) & 0.928(8.9e-04) \\
kddcup       &  \bestb{1.000} (6.1e-08) &   \bestb{1.000} (4.1e-07) &   \bestb{1.000} (5.8e-07) &    \bestb{1.000} (5.7e-07) &  0.997(5.2e-05) & 0.998(2.1e-03) & \bestf{1.000}(1.0e-05) & \bestf{1.000}(6.0e-06) & \bestf{1.000}(1.6e-05) & \bestf{1.000}(6.8e-06) \\
kick         &  \bestb{0.777} (7.7e-04) &  0.770 (2.8e-04) &  \bestb{0.777} (6.8e-04) &  0.771 (1.6e-03) &  0.749(1.1e-03) & \bestf{0.764}(7.9e-04) & 0.756(8.3e-04) & 0.763(5.4e-04) & 0.752(9.7e-04) & 0.754(5.2e-04)  \\
letter       &  \bestb{1.000} (1.3e-05) &  \bestb{1.000} (2.7e-06) &  \bestb{1.000} (4.1e-06) &  \bestb{1.000} (2.1e-05) &  0.997(2.6e-04) & \bestf{0.999}(1.2e-04) & 0.997(2.3e-04) & \bestf{0.999}(5.5e-05) & 0.995(4.8e-04) & 0.993(2.4e-04)  \\
satimage     &  \bestb{0.991} (2.1e-04) &  \bestb{0.991} (3.6e-05) &  \bestb{0.991} (2.3e-04) &  0.987 (0.0e+00) &  0.985(1.3e-03) & \bestf{0.991}(3.8e-04) & 0.986(1.2e-03) & \bestf{0.991}(1.7e-04) & 0.984(6.4e-04) & 0.983(1.9e-04) \\
sensorless   &  \bestb{1.000} (4.9e-07) &  \bestb{1.000} (1.4e-07) &  \bestb{1.000} (4.4e-06) &  \bestb{1.000} (2.9e-06) &  \bestf{1.000}(2.3e-05) & \bestf{1.000}(5.0e-07) & \bestf{1.000}(1.1e-04) & \bestf{1.000}(1.4e-05) & 0.999(5.4e-05) & \bestf{1.000}(7.3e-06) 
 \\
spambase     &  \bestb{0.990} (1.5e-04) &  \bestb{0.990} (5.2e-05) &  0.987 (1.2e-03) &  0.986 (0.0e+00) &  0.980(1.8e-03) & \bestf{0.987}(2.3e-04) & 0.983(1.0e-03) & \bestf{0.987}(2.2e-04) & 0.972(1.1e-03) & 0.974(2.9e-04) \\
\bottomrule
\end{tabular}}
\end{sideways}
\label{tab:performance-with-std}
\end{table}
\begin{table}[htbp]
\caption{The results of Table~\ref{tab:performance} measured through log-loss corresponding to all algorithms after hyperoptimization on the considered datasets. We observe that WW often improves the performance of RF or achieves a close result (smaller is better, bold is for best EGB performance, underline for best \texttt{RF}$n$, \texttt{WW}$n$ or \texttt{RD}$n$ performance).}
\resizebox{\textwidth}{!}{\begin{tabular}{l|cccc|cccccc}
\toprule
 &   \texttt{XGB} &  \texttt{LGBM} & \texttt{CB} & \texttt{HGB} & \texttt{RF}$10$ & \texttt{RF}$100$ & \texttt{WW}$10$ & \texttt{WW}$100$ & \texttt{RD}$10$ & \texttt{RD}$100$ \\
\midrule
adult        &  0.273 &  \bestb{0.271} &  0.281 &  0.275 &  0.302 & 0.294 & 0.296 & \bestf{0.293} & 0.299 & 0.296\\
bank         &  0.200 &  \bestb{0.196} &  0.208 &  0.202 &  0.214 &  0.207 &  0.206 &  \bestf{0.201} & 0.213 & 0.210\\
breastcancer &  0.126 &  \bestb{0.101} &  0.138 &  0.119 &  0.155 &  \bestf{0.122} & 0.135 &  0.126 & 0.164 & 0.176\\
car          &  0.048 &  0.040 &  0.051 &  \bestb{0.015} &  0.163 &  0.085 & \bestf{0.078} &  \bestf{0.078} & 0.194 & 0.190\\
covtype      &  0.087 &  \bestb{0.075} &  0.126 &  0.079 &  0.164 &  \bestf{0.118} &  0.141 &  0.123 & 0.495  & 0.233\\
default-cb   &  0.430 &  \bestb{0.429} &  0.430 &  0.431 &  0.439 &  \bestf{0.432} &  0.434 &  \bestf{0.432} & 0.435 & 0.435\\
higgs        &  0.472 &  \bestb{0.467} &  0.481 &  0.484 &  0.517 &  \bestf{0.499} &  0.518 &  \bestf{0.499} & 0.525 & 0.523\\
internet     &  1.446 &  1.552 &  \bestb{1.413} &  1.573 &  1.564 &  \bestf{1.450} &  1.505 &  1.520 & 1.561 & 1.560\\
kddcup       &  \bestb{0.000} &  \bestb{0.000} &  \bestb{0.000} &  \bestb{0.000} &  \bestf{0.000} &  \bestf{0.000} &  \bestf{0.000} & \bestf{0.000}  & 0.001  & 0.001\\
kick         &  \bestb{0.293} &  0.298 &  \bestb{0.293} &  0.295 &  0.306 &  \bestf{0.297} &  0.308 &  0.306 & 0.303 & 0.302\\
letter       &  0.116 &  \bestb{0.111} &  0.113 &  0.138 &  0.473 &  0.276 &  0.358 &  \bestf{0.274} & 0.691 & 0.807\\
satimage     &  \bestb{0.227} &  0.234 &  0.228 &  0.265 &  0.333 &  \bestf{0.261} &  0.313 &  0.265 & 0.372 & 0.365\\
sensorless   &  \bestb{0.004} &  \bestb{0.004} &  \bestb{0.004} &  0.005 &  0.033 &  \bestf{0.025} &  0.035 &  0.027 & 0.074 & 0.075\\
spambase     &  \bestb{0.115} &  0.121 &  0.136 &  0.137 &  0.203 &  0.216 &  0.178 &  \bestf{0.160}  & 0.215 & 0.210\\
\bottomrule
\end{tabular}}
\label{tab:performance-logloss}
\end{table}

\begin{table}[htbp]
\centering
\caption{The same training time table as in Table~\ref{tab:fit_time}, as average over 5 runs, with standard deviation computed from 5 runs reported between parenthesis, for default parameters for each model. Top: training time in seconds; bottom: test AUC. We observe that WW is generally among the fastest algorithms, for performances comparable to ones of all baselines (bold is for best EGB training time or performance, underline for best \texttt{RF} or \texttt{WW} training time or performance).}
\small
\begin{tabular}{l|cccc|cc}
\toprule
 & \multicolumn{6}{c}{Training time (seconds)} \\
 \cmidrule(lr){2-7} 
 &  \texttt{XGB} & \texttt{LGBM} & \texttt{CB} &  \texttt{HGB} & \texttt{RF} &  \texttt{WW} \\
\midrule
covtype  & 10 (0.6) & \bestb{3} (0.1) & 120 (9.3) &  14 (7.7) & 21 (0.9) & \bestf{3} (0.1) \\
higgs    &   36 (0.6) &  \bestb{30} (1.4) & 653 (8.7) &  85 (0.2) &  1389 (11.1) &  \bestf{179} (4.5) \\
internet &    9 (0.7) &   \bestb{4} (0.1) &    188 (2.4) &   8 (0.3) &  0.4 (0.0) &  \bestf{0.3} (0.0) \\
kddcup   &  175 (5.1) &  41 (2.6) &  2193 (13.2) &  \bestb{31} (0.2) &  208 (3.8) &  \bestf{12} (0.8) \\
kick     &    7 (0.2) &   \bestb{0.4} (0.0) &  50 (0.7) &  0.7 (0.1) & 31 (0.1) & \bestf{5} (0.0) \\
\bottomrule
\end{tabular}
\vspace{5pt}

\begin{tabular}{l|cccc|cc}
\toprule
& \multicolumn{6}{c}{Test AUC} \\
 \cmidrule(lr){2-7}
&  \texttt{XGB} & \texttt{LGBM} & \texttt{CB} &  \texttt{HGB} & \texttt{RF} &  \texttt{WW} \\
\midrule
covtype  & 0.986 (2e-04) &  0.978 (2e-03) &  \bestb{0.989} (9e-05) &  0.960 (1e-02) &  \bestf{0.998} (6e-05) &  0.979 (5e-04) \\
higgs    & 0.823 (3e-04) &  0.812 (2e-04) & \bestb{0.840} (8e-05) &  0.812 (2e-04) &  \bestf{0.838} (9e-05) &  0.813 (1e-04) \\
internet & \bestb{0.918} (2e-05) &  0.828 (0e+00) &  0.910 (8e-03) &  0.500 (0e+00) &  0.862 (3e-03) &  \bestf{0.889} (7e-03) \\
kddcup   & \bestb{1.000} (3e-07) &  0.638 (3e-02) &  0.988 (7e-03) &  0.740 (6e-02) &  0.998 (2e-03) &  \bestf{1.000} (3e-06) \\
kick     & 0.768 (4e-04) &  0.757 (0e+00) &  \bestb{0.781} (3e-04) &  0.773 (2e-03) &  0.747 (2e-03) &  \bestf{0.751} (2e-03) \\
\bottomrule
\end{tabular}
\label{tab:fit_time_std}
\end{table}

\section{Supplementary details about hyperparameter tuning}
\label{sec:exp-details}
In this Section, we provide extra information about hyperparameters optimization.
For XGBoost, LightGBM and CatBoost, with all other hyperparameters fixed, we use early stopping by monitoring the log loss on the validation set, the maximum number of boosting iterations being set at $5,000$. 
The best number of iterations is used together with other best hyperparameters to refit over the whole training set before evaluation on the test set.
For \texttt{scikit-learn}'s Random Forest and WildWood, we report results both for $10$ and $100$ trees, note that the default choice is $10$ for WildWood (since subtrees aggregation allows to use fewer trees than RF) while default is $100$ in \texttt{scikit-learn}.
We list the hyperparameters search space of each algorithm below.
\paragraph{XGBoost}
\begin{itemize}
    \item \texttt{eta}: log-uniform distribution $[e^{-7},1]$;
    \item \texttt{max\_depth}: discrete uniform distribution $[2,10]$;
    \item \texttt{subsample}: uniform $[0.5,1]$;
    \item \texttt{colsample\_bytree}: uniform $[0.5, 1]$;
    \item \texttt{colsample\_bylevel}: uniform $[0.5, 1]$;
    \item \texttt{min\_child\_weight}: log-uniform distribution $[e^{-16}, e^5]$;
    \item \texttt{alpha}: 0 with probability 0.5, and log-uniform distribution $[e^{-16}, e^2]$ with probability 0.5;
    \item \texttt{lambda}: 0 with probability 0.5, and log-uniform distribution $[e^{-16}, e^2]$ with probability 0.5;
    \item \texttt{gamma}: 0 with probability 0.5, and log-uniform distribution $[e^{-16}, e^2]$ with probability 0.5;
\end{itemize}
\paragraph{LightGBM}
\begin{itemize}
    \item \texttt{learning\_rate}: log-uniform distribution $[e^{-7},1]$;
    \item \texttt{num\_leaves}: discrete log-uniform distribution $[1,e^7]$;
    \item \texttt{feature\_fraction}: uniform $[0.5,1]$;
    \item \texttt{bagging\_fraction}: uniform $[0.5,1]$;
    \item \texttt{min\_data\_in\_leaf}: discrete log-uniform distribution $[1,e^6]$;
    \item \texttt{min\_sum\_hessian\_in\_leaf}: log-uniform distribution $[e^{-16}, e^5]$;
    \item \texttt{lambda\_l1}: 0 with probability 0.5, and log-uniform distribution $[e^{-16}, e^2]$ with probability 0.5;
    \item \texttt{lambda\_l2}: 0 with probability 0.5, and log-uniform distribution $[e^{-16}, e^2]$ with probability 0.5;
\end{itemize}
\paragraph{CatBoost}
\begin{itemize}
    \item \texttt{learning\_rate}: log-uniform distribution $[e^{-7},1]$;
    \item \texttt{random\_strength}: discrete uniform distribution over $\{1, 20\}$;
    \item \texttt{one\_hot\_max\_size}: discrete uniform distribution over $\{0, 25\}$;
    \item \texttt{l2\_leaf\_reg}: log-uniform distribution $[1,10]$;
    \item \texttt{bagging\_temperature}: uniform $[0,1]$.
\end{itemize}
\paragraph{HistGradientBoosting}
\begin{itemize}
    \item  \texttt{learning\_rate}: log-uniform distribution $[e^{-4},1]$;
    \item \texttt{max\_leaf\_nodes}: discrete log-uniform distribution $[1,e^7]$;
    \item \texttt{min\_samples\_leaf}: discrete log-uniform distribution $[1,e^6]$;
    \item \texttt{l2\_regularization}: 0 with probability 0.5, and log-uniform distribution $[e^{-16}, e^2]$ with probability 0.5;
\end{itemize}
\paragraph{RandomForest}
\begin{itemize}
    \item \texttt{max\_features}: uniform among \texttt{None}, \texttt{sqrt}, \texttt{log2}, \texttt{0.25}, \texttt{0.5} and \texttt{0.75};
    \item \texttt{max\_depth}: uniform among \texttt{None}, \texttt{sqrt} and \texttt{log2}, the latter two are meant in terms of the train sample size;
    \item \texttt{min\_samples\_leaf}: uniform over $\{1,5,10\}$ and we set $\mathtt{min\_samples\_split} = 2 \times \mathtt{min\_samples\_leaf}$;
\end{itemize}
\paragraph{WildWood}
\begin{itemize}
    \item \texttt{multiclass}: \texttt{multinomial} with probability 0.5, and \texttt{ovr} with probability 0.5;
    \item \texttt{min\_samples\_leaf}: uniform over $\{1,5,10\}$ and we set $\mathtt{min\_samples\_split} = 2 \times \mathtt{min\_samples\_leaf}$;
    \item \texttt{step}: log-uniform distribution $[e^{-3}, e^{6}]$;
    \item \texttt{dirichlet}: log-uniform distribution $[e^{-7}, e^{2}]$;
    \item \texttt{cat\_split\_strategy}:   \texttt{binary} with probability 0.5, and \texttt{all} with probability 0.5;
    \item \texttt{max\_features}: uniform among \texttt{None}, \texttt{sqrt}, \texttt{log2}, \texttt{0.25}, \texttt{0.5} and \texttt{0.75};
    \item \texttt{max\_depth}: uniform among \texttt{None}, \texttt{sqrt} and \texttt{log2}, the latter two are meant in terms of the train sample size;
    
\end{itemize}

\section{Datasets}
\label{sec:datasets}

The main characteristics of the datasets used in the paper are summarized in Table~\ref{tab:dataset-descrip}.
We provide in Table~\ref{tab:data-source} the URL of all the datasets used, most of them are from the UCI machine learning repository~\citep{Dua:2019}.
\begin{table}[htbp]
\centering
\begin{tabular}{lcccccc}
    \toprule
    Dataset & \# Samples & \# Features & \# Categorical features & \# Classes & Gini \\
    \midrule
    adult & 48,842 & 14 & 8 & 2 & 0.36 \\
    bank & 45,211 & 16 & 10 & 2 & 0.21 \\
    banknote & 1,372 & 4 & 0 & 2 & 0.49 \\
    breastcancer & 569 & 30 & 0 & 2 & 0.47 \\
    car & 1,728 & 6 & 6 & 4 & 0.46 \\
    covtype & 581,012 & 54 & 0 & 7 & 0.62  \\
    default\_cb & 30,000 & 23 & 3 & 2 & 0.34 \\
    higgs & 11,000,000 & 28 & 0 & 2 & 0.50  \\
    HCV & 615 & 13 & 2 & 2 & 0.19  \\
    internet & 10,108 & 70 & 70 & 46 & 0.88 \\
    ionosphere & 351 & 34 & 0 & 2 & 0.46 \\
    kddcup99 & 4,898,431 & 41 & 7 & 23 & 0.58 \\
    kick & 72,983 & 32 & 18 & 2 & 0.22 \\
    letter & 20,000 & 16 & 0 & 26 & 0.96 \\
    satimage & 5,104 & 36 & 0 & 6 & 0.81  \\
    sensorless & 58,509 & 48 & 0 & 11 & 0.91 \\
    spambase & 4,601 & 57 & 0 & 2 & 0.48 \\
    \bottomrule
\end{tabular}
\caption{Main characteristics of the datasets used in experiments, including number of samples, number of features, number of categorical features, number of classes and the Gini index of the class distribution on the whole datasets (rescaled between $0$ and $1$), in order to quantify class unbalancing.}
\label{tab:dataset-descrip}
\end{table}

\begin{table}[htbp]
    \centering
    \footnotesize
    \resizebox{\textwidth}{!}{\begin{tabular}{ll}
    \toprule
        Dataset & URL \\
        \midrule
        adult & \url{https://archive.ics.uci.edu/ml/datasets/Adult} \\
        bank & \url{https://archive.ics.uci.edu/ml/datasets/bank+marketing} \\
        banknote & \url{https://archive.ics.uci.edu/ml/datasets/banknote+authentication} \\
        breastcancer & \url{https://archive.ics.uci.edu/ml/datasets/breast+cancer+wisconsin+(diagnostic)}\\
        car & \url{https://archive.ics.uci.edu/ml/datasets/car+evaluation} \\
        covtype & \url{https://archive.ics.uci.edu/ml/datasets/covertype} \\
        default\_cb & \url{https://archive.ics.uci.edu/ml/datasets/default+of+credit+card+clients} \\
        HCV & \url{https://archive.ics.uci.edu/ml/datasets/HCV+data} \\
        higgs & \url{https://archive.ics.uci.edu/ml/datasets/HIGGS} \\
        internet & \url{https://kdd.ics.uci.edu/databases/internet_usage/internet_usage.html} \\
        ionosphere & \url{https://archive.ics.uci.edu/ml/datasets/ionosphere}\\
        kddcup99 & \url{https://kdd.ics.uci.edu/databases/kddcup99/kddcup99.html} \\
        kick & \url{https://www.openml.org/d/41162} \\
        letter & \url{https://archive.ics.uci.edu/ml/datasets/letter+recognition} \\
        satimage & \url{https://archive.ics.uci.edu/ml/datasets/Statlog+(Landsat+Satellite)} \\
        sensorless & \url{https://archive.ics.uci.edu/ml/datasets/dataset+for+sensorless+drive+diagnosis} \\
        spambase & \url{https://archive.ics.uci.edu/ml/datasets/spambase} \\
        \bottomrule
    \end{tabular}}
    \caption{The URLs of all the datasets used in the paper, giving direct download links and supplementary details.}
    \label{tab:data-source}
\end{table}
Note that the HCV data set labels were binarized by considering the class ``Blood Donor'' against all the others.

\section{Sensitivity of hyperparameters of Wildwood}
\label{apd:hyperparameters-sensitivity}

In Table~\ref{tab:hyperparameters-sensitivity} we illustrate the effects of hyperparameters on WW's performance on a few datasets, measured by the test AUC.
We can observe in this table that it is only weakly affected by different combinations of hyperparameters.
\begin{table}[htbp]
\centering
\footnotesize
\begin{tabular}{rrrr}
\toprule
\multicolumn{4}{c}{adult} \\
$n_{\text{min-leaf}}$ &  $\alpha$ &  $\eta$ &  AUC \\
\midrule
1 &       0.1 &  0.1 &    0.913 \\
1 &       0.1 &  1.0 &    0.916 \\
1 &       0.1 & 10.0 &    0.918 \\
1 &       0.5 &  0.1 &    0.913 \\
1 &       0.5 &  1.0 &    0.917 \\
1 &       0.5 & 10.0 &    0.919 \\
1 &       2.5 &  0.1 &    0.913 \\
1 &       2.5 &  1.0 &    0.917 \\
1 &       2.5 & 10.0 &    0.919 \\
5 &       0.1 &  0.1 &    0.913 \\
5 &       0.1 &  1.0 &    0.916 \\
5 &       0.1 & 10.0 &    0.918 \\
5 &       0.5 &  0.1 &    0.913 \\
5 &       0.5 &  1.0 &    0.916 \\
5 &       0.5 & 10.0 &    0.918 \\
5 &       2.5 &  0.1 &    0.913 \\
5 &       2.5 &  1.0 &    0.917 \\
5 &       2.5 & 10.0 &    0.918 \\
10 &       0.1 &  0.1 &    0.913 \\
10 &       0.1 &  1.0 &    0.916 \\
10 &       0.1 & 10.0 &    0.917 \\
10 &       0.5 &  0.1 &    0.913 \\
10 &       0.5 &  1.0 &    0.916 \\
10 &       0.5 & 10.0 &    0.917 \\
10 &       2.5 &  0.1 &    0.913 \\
10 &       2.5 &  1.0 &    0.916 \\
10 &       2.5 & 10.0 &    0.918 \\
\bottomrule
\end{tabular}
\hfill
\begin{tabular}{rrrr}
\toprule
\multicolumn{4}{c}{bank} \\
$n_{\text{min-leaf}}$ &  $\alpha$ &  $\eta$ &  AUC \\
\midrule
1 &       0.1 &  0.1 &    0.919 \\
1 &       0.1 &  1.0 &    0.926 \\
1 &       0.1 & 10.0 &    0.929 \\
1 &       0.5 &  0.1 &    0.920 \\
1 &       0.5 &  1.0 &    0.927 \\
1 &       0.5 & 10.0 &    0.929 \\
1 &       2.5 &  0.1 &    0.921 \\
1 &       2.5 &  1.0 &    0.927 \\
1 &       2.5 & 10.0 &    0.929 \\
5 &       0.1 &  0.1 &    0.919 \\
5 &       0.1 &  1.0 &    0.926 \\
5 &       0.1 & 10.0 &    0.928 \\
5 &       0.5 &  0.1 &    0.920 \\
5 &       0.5 &  1.0 &    0.926 \\
5 &       0.5 & 10.0 &    0.928 \\
5 &       2.5 &  0.1 &    0.921 \\
5 &       2.5 &  1.0 &    0.927 \\
5 &       2.5 & 10.0 &    0.928 \\
10 &       0.1 &  0.1 &    0.920 \\
10 &       0.1 &  1.0 &    0.926 \\
10 &       0.1 & 10.0 &    0.927 \\
10 &       0.5 &  0.1 &    0.920 \\
10 &       0.5 &  1.0 &    0.926 \\
10 &       0.5 & 10.0 &    0.927 \\
10 &       2.5 &  0.1 &    0.920 \\
10 &       2.5 &  1.0 &    0.926 \\
10 &       2.5 & 10.0 &    0.928 \\
\bottomrule
\end{tabular}
\hfill
\begin{tabular}{rrrr}
\toprule
\multicolumn{4}{c}{car} \\
$n_{\text{min-leaf}}$ &  $\alpha$ &  $\eta$ &  AUC \\
\midrule
                1 &       0.1 &  0.1 &    0.992 \\
                1 &       0.1 &  1.0 &    0.995 \\
                1 &       0.1 & 10.0 &    0.995 \\
                1 &       0.5 &  0.1 &    0.992 \\
                1 &       0.5 &  1.0 &    0.994 \\
                1 &       0.5 & 10.0 &    0.995 \\
                1 &       2.5 &  0.1 &    0.990 \\
                1 &       2.5 &  1.0 &    0.992 \\
                1 &       2.5 & 10.0 &    0.993 \\
                5 &       0.1 &  0.1 &    0.990 \\
                5 &       0.1 &  1.0 &    0.992 \\
                5 &       0.1 & 10.0 &    0.992 \\
                5 &       0.5 &  0.1 &    0.990 \\
                5 &       0.5 &  1.0 &    0.992 \\
                5 &       0.5 & 10.0 &    0.992 \\
                5 &       2.5 &  0.1 &    0.987 \\
                5 &       2.5 &  1.0 &    0.991 \\
                5 &       2.5 & 10.0 &    0.991 \\
               10 &       0.1 &  0.1 &    0.983 \\
               10 &       0.1 &  1.0 &    0.987 \\
               10 &       0.1 & 10.0 &    0.987 \\
               10 &       0.5 &  0.1 &    0.983 \\
               10 &       0.5 &  1.0 &    0.987 \\
               10 &       0.5 & 10.0 &    0.987 \\
               10 &       2.5 &  0.1 &    0.981 \\
               10 &       2.5 &  1.0 &    0.985 \\
               10 &       2.5 & 10.0 &    0.986 \\
\bottomrule
\end{tabular}
\caption{Areas under the ROC curves (AUC) obtained on test samples with WildWood (using 100 trees in the forest) on the adult, bank and car datasets with combinations of several hyper parameters. We observe that WildWood's performance does not vary significantly with respect to these hyperparameters.}
\label{tab:hyperparameters-sensitivity}
\end{table}

\section{Supplementary details about assets used (versions and licenses)}
\label{sec:details-assets-licences}

The versions and licenses of the libraries used in our experiments are:
\begin{itemize}
    \item \texttt{catboost} (0.25.1), Apache License 2.0
    \item \texttt{hyperopt} (0.2.5), license: \url{https://github.com/hyperopt/hyperopt/blob/master/LICENSE.txt}
    \item \texttt{joblib} (0.17), BSD-3-Clause License
    \item \texttt{lightgbm} (3.2.1), MIT License
    \item \texttt{matplotlib} (3.3.1), license:  \url{https://github.com/matplotlib/matplotlib/blob/master/LICENSE/LICENSE}
    \item \texttt{numba} (0.52), BSD 2-Clause "Simplified" License
    \item \texttt{numpy} (1.19.2), BSD-3-Clause License
    \item \texttt{pandas} (1.2.4), BSD-3-Clause License
    \item \texttt{python} (3.7.9), Python Software Fundation Licence version 2
    \item \texttt{scikit-learn} (0.24.2), BSD 3-Clause License
    \item \texttt{scipy} (1.5.4), BSD 3-Clause License
    \item \texttt{seaborn} (0.11), BSD-3-Clause License
    \item \texttt{xgboost} (1.4.1), Apache License 2.0
\end{itemize}

All the datasets used are publicly accessible and have no copyright restrictions.


\begin{thebibliography}{}

\bibitem[\protect\citeauthoryear{Agarwal, Tan, Ronen, Singh, and Yu}{Agarwal
  et~al.}{2022}]{agarwal2022hierarchical}
Agarwal, A., Y.~S. Tan, O.~Ronen, C.~Singh, and B.~Yu (2022).
\newblock Hierarchical shrinkage: Improving the accuracy and interpretability
  of tree-based models.
\newblock In {\em International Conference on Machine Learning}, pp.\
  111--135. PMLR.

\bibitem[\protect\citeauthoryear{Alghatani, Ammar, Rezgui, Shaban-Nejad,
  et~al.}{Alghatani et~al.}{2021}]{alghatani2021predicting}
Alghatani, K., N.~Ammar, A.~Rezgui, A.~Shaban-Nejad, et~al. (2021).
\newblock Predicting intensive care unit length of stay and mortality using
  patient vital signs: Machine learning model development and validation.
\newblock {\em JMIR Medical Informatics\/}~{\em 9\/}(5), e21347.

\bibitem[\protect\citeauthoryear{Arlot and Genuer}{Arlot and
  Genuer}{2014}]{arlot2014purf_bias}
Arlot, S. and R.~Genuer (2014).
\newblock Analysis of purely random forests bias.
\newblock {\em arXiv preprint arXiv:1407.3939\/}.

\bibitem[\protect\citeauthoryear{Athey and Imbens}{Athey and
  Imbens}{2016}]{athey2016recursive}
Athey, S. and G.~Imbens (2016).
\newblock Recursive partitioning for heterogeneous causal effects.
\newblock {\em Proceedings of the National Academy of Sciences\/}~{\em
  113\/}(27), 7353--7360.

\bibitem[\protect\citeauthoryear{Athey, Tibshirani, and Wager}{Athey
  et~al.}{2018}]{athey2018generalized}
Athey, S., J.~Tibshirani, and S.~Wager (2018).
\newblock Generalized random forests.

\bibitem[\protect\citeauthoryear{Bergstra, Komer, Eliasmith, Yamins, and
  Cox}{Bergstra et~al.}{2015}]{bergstra2015hyperopt}
Bergstra, J., B.~Komer, C.~Eliasmith, D.~Yamins, and D.~D. Cox (2015).
\newblock Hyperopt: a python library for model selection and hyperparameter
  optimization.
\newblock {\em Computational Science \& Discovery\/}~{\em 8\/}(1), 014008.

\bibitem[\protect\citeauthoryear{Biau}{Biau}{2012}]{biau2012analysis}
Biau, G. (2012).
\newblock Analysis of a random forests model.
\newblock {\em The Journal of Machine Learning Research\/}~{\em 13\/}(1),
  1063--1095.

\bibitem[\protect\citeauthoryear{Biau, Devroye, and Lugosi}{Biau
  et~al.}{2008}]{biau2008consistency}
Biau, G., L.~Devroye, and G.~Lugosi (2008).
\newblock Consistency of random forests and other averaging classifiers.
\newblock {\em Journal of Machine Learning Research\/}~{\em 9}, 2015--2033.

\bibitem[\protect\citeauthoryear{Biau and Scornet}{Biau and
  Scornet}{2016}]{biau2016rf_tour}
Biau, G. and E.~Scornet (2016).
\newblock A random forest guided tour.
\newblock {\em TEST\/}~{\em 25\/}(2), 197--227.

\bibitem[\protect\citeauthoryear{Breiman}{Breiman}{1996}]{breiman1996bagging}
Breiman, L. (1996).
\newblock Bagging predictors.
\newblock {\em Machine learning\/}~{\em 24\/}(2), 123--140.

\bibitem[\protect\citeauthoryear{Breiman}{Breiman}{2001}]{breiman2001randomforests}
Breiman, L. (2001).
\newblock Random forests.
\newblock {\em Machine learning\/}~{\em 45\/}(1), 5--32.

\bibitem[\protect\citeauthoryear{Breiman, Friedman, Olshen, and Stone}{Breiman
  et~al.}{1984}]{breiman1984cart}
Breiman, L., J.~Friedman, R.~A. Olshen, and C.~J. Stone (1984).
\newblock {\em Classification and regression trees}.
\newblock The Wadsworth statistics/probability series. Monterey, CA: CRC.

\bibitem[\protect\citeauthoryear{Breslow and Aha}{Breslow and
  Aha}{1997}]{breslow1997simplifying}
Breslow, L.~A. and D.~W. Aha (1997).
\newblock Simplifying decision trees: A survey.
\newblock {\em The Knowledge Engineering Review\/}~{\em 12\/}(01), 1--40.

\bibitem[\protect\citeauthoryear{Buntine}{Buntine}{1992}]{buntine1992learning}
Buntine, W. (1992).
\newblock Learning classification trees.
\newblock {\em Statistics and computing\/}~{\em 2\/}(2), 63--73.

\bibitem[\protect\citeauthoryear{Calvi{\~{n}}o}{Calvi{\~{n}}o}{2020}]{10.1007/978-3-030-62362-3_16}
Calvi{\~{n}}o, A. (2020).
\newblock On random-forest-based prediction intervals.
\newblock In C.~Analide, P.~Novais, D.~Camacho, and H.~Yin (Eds.), {\em
  Intelligent Data Engineering and Automated Learning -- IDEAL 2020}, Cham,
  pp.\  172--184. Springer International Publishing.

\bibitem[\protect\citeauthoryear{Catoni}{Catoni}{2004}]{catoni2004statistical}
Catoni, O. (2004).
\newblock {\em Statistical Learning Theory and Stochastic Optimization: Ecole
  d'Et{\'e} de Probabilit{\'e}s de Saint-Flour XXXI - 2001}, Volume 1851 of
  {\em Lecture Notes in Mathematics}.
\newblock Springer-Verlag Berlin Heidelberg.

\bibitem[\protect\citeauthoryear{Catoni}{Catoni}{2007}]{catoni2007pacbayes}
Catoni, O. (2007).
\newblock {\em {PAC-Bayesian Supervised Classification: The Thermodynamics of
  Statistical Learning}}, Volume~56 of {\em IMS Lecture Notes Monograph
  Series}.
\newblock Institute of Mathematical Statistics.

\bibitem[\protect\citeauthoryear{Cesa-Bianchi, Conconi, and
  Gentile}{Cesa-Bianchi et~al.}{2004}]{cesabianchi2004online_to_batch}
Cesa-Bianchi, N., A.~Conconi, and C.~Gentile (2004, Sept).
\newblock On the generalization ability of on-line learning algorithms.
\newblock {\em IEEE Transactions on Information Theory\/}~{\em 50\/}(9),
  2050--2057.

\bibitem[\protect\citeauthoryear{Cesa-Bianchi and Lugosi}{Cesa-Bianchi and
  Lugosi}{2006}]{cesabianchi2006plg}
Cesa-Bianchi, N. and G.~Lugosi (2006).
\newblock {\em Prediction, Learning, and Games}.
\newblock Cambridge, New York, USA: Cambridge University Press.

\bibitem[\protect\citeauthoryear{Chen and Guestrin}{Chen and
  Guestrin}{2016}]{xgboost_paper}
Chen, T. and C.~Guestrin (2016).
\newblock Xgboost: A scalable tree boosting system.
\newblock In {\em Proceedings of the 22nd ACM SIGKDD International Conference
  on Knowledge Discovery and Data Mining}, KDD '16, New York, NY, USA, pp.\
  785–794. Association for Computing Machinery.

\bibitem[\protect\citeauthoryear{Chen, Zhou, and Yu}{Chen
  et~al.}{2021}]{chen2021adasynrandom}
Chen, Z., L.~Zhou, and W.~Yu (2021).
\newblock Adasyn-random forest based intrusion detection model.

\bibitem[\protect\citeauthoryear{Chipman, George, and McCulloch}{Chipman
  et~al.}{1998}]{chipman1998bayesiancart}
Chipman, H.~A., E.~I. George, and R.~E. McCulloch (1998).
\newblock Bayesian {CART} model search.
\newblock {\em Journal of the American Statistical Association\/}~{\em
  93\/}(443), 935--948.

\bibitem[\protect\citeauthoryear{Chipman, George, and McCulloch}{Chipman
  et~al.}{2010}]{chipman2010bart}
Chipman, H.~A., E.~I. George, and R.~E. McCulloch (2010).
\newblock {BART}: Bayesian additive regression trees.
\newblock {\em The Annals of Applied Statistics\/}~{\em 4\/}(1), 266--298.

\bibitem[\protect\citeauthoryear{Coppersmith, Hong, and Hosking}{Coppersmith
  et~al.}{1999}]{PartitioningNominal}
Coppersmith, D., S.~Hong, and J.~Hosking (1999, 01).
\newblock Partitioning nominal attributes in decision trees.
\newblock {\em Data Mining and Knowledge Discovery\/}~{\em 3}, 197--217.

\bibitem[\protect\citeauthoryear{Criminisi, Shotton, and Konukoglu}{Criminisi
  et~al.}{2012}]{criminisi2012decision}
Criminisi, A., J.~Shotton, and E.~Konukoglu (2012).
\newblock Decision forests: A unified framework for classification, regression,
  density estimation, manifold learning and semi-supervised learning.
\newblock {\em Found. Trends Comput. Graph. Vis.\/}~{\em 7}, 81--227.

\bibitem[\protect\citeauthoryear{Dalalyan and Tsybakov}{Dalalyan and
  Tsybakov}{2008}]{dalalyan2008aggregation}
Dalalyan, A. and A.~B. Tsybakov (2008, Aug).
\newblock Aggregation by exponential weighting, sharp pac-bayesian bounds and
  sparsity.
\newblock {\em Machine Learning\/}~{\em 72\/}(1), 39--61.

\bibitem[\protect\citeauthoryear{Denison, Mallick, and Smith}{Denison
  et~al.}{1998}]{denison1998bayesiancart}
Denison, D. G.~T., B.~K. Mallick, and A.~F.~M. Smith (1998).
\newblock A bayesian {CART} algorithm.
\newblock {\em Biometrika\/}~{\em 85\/}(2), 363--377.

\bibitem[\protect\citeauthoryear{Domb, Bonchek-Dokow, and Leshem}{Domb
  et~al.}{2017}]{domb2017lightweight}
Domb, M., E.~Bonchek-Dokow, and G.~Leshem (2017).
\newblock Lightweight adaptive random-forest for iot rule generation and
  execution.
\newblock {\em Journal of Information Security and Applications\/}~{\em 34},
  218--224.

\bibitem[\protect\citeauthoryear{Donoho}{Donoho}{1993}]{donoho1993nonlinear}
Donoho, D.~L. (1993).
\newblock Nonlinear wavelet methods for recovery of signals, densities, and
  spectra from indirect and noisy data.
\newblock In {\em In Proceedings of Symposia in Applied Mathematics}. Citeseer.

\bibitem[\protect\citeauthoryear{Donoho and Johnstone}{Donoho and
  Johnstone}{1994}]{donoho1994ideal}
Donoho, D.~L. and J.~M. Johnstone (1994).
\newblock Ideal spatial adaptation by wavelet shrinkage.
\newblock {\em biometrika\/}~{\em 81\/}(3), 425--455.

\bibitem[\protect\citeauthoryear{Donsker and Varadhan}{Donsker and
  Varadhan}{1976}]{donsker1976asymptotic}
Donsker, M.~D. and S.~S. Varadhan (1976).
\newblock Asymptotic evaluation of certain markov process expectations for
  large time—iii.
\newblock {\em Communications on pure and applied Mathematics\/}~{\em 29\/}(4),
  389--461.

\bibitem[\protect\citeauthoryear{Dorogush, Ershov, and Gulin}{Dorogush
  et~al.}{2018}]{dorogush2018catboost}
Dorogush, A.~V., V.~Ershov, and A.~Gulin (2018).
\newblock Catboost: gradient boosting with categorical features support.
\newblock {\em arXiv preprint arXiv:1810.11363\/}.

\bibitem[\protect\citeauthoryear{Dua and Graff}{Dua and Graff}{2017}]{Dua:2019}
Dua, D. and C.~Graff (2017).
\newblock {UCI} machine learning repository.

\bibitem[\protect\citeauthoryear{Efron and Tibshirani}{Efron and
  Tibshirani}{1997}]{efron1997improvements}
Efron, B. and R.~Tibshirani (1997).
\newblock Improvements on cross-validation: the 632+ bootstrap method.
\newblock {\em Journal of the American Statistical Association\/}~{\em
  92\/}(438), 548--560.

\bibitem[\protect\citeauthoryear{Genuer}{Genuer}{2012}]{genuer2012variance}
Genuer, R. (2012).
\newblock Variance reduction in purely random forests.
\newblock {\em Journal of Nonparametric Statistics\/}~{\em 24\/}(3), 543--562.

\bibitem[\protect\citeauthoryear{Geurts, Ernst, and Wehenkel}{Geurts
  et~al.}{2006}]{geurts2006extremely}
Geurts, P., D.~Ernst, and L.~Wehenkel (2006).
\newblock Extremely randomized trees.
\newblock {\em Machine learning\/}~{\em 63\/}(1), 3--42.

\bibitem[\protect\citeauthoryear{Hastie and Pregibon}{Hastie and
  Pregibon}{1990}]{hastie1990shrinking}
Hastie, T. and D.~Pregibon (1990).
\newblock {\em Shrinking trees}.
\newblock AT \& T Bell Laboratories.

\bibitem[\protect\citeauthoryear{Helmbold and Schapire}{Helmbold and
  Schapire}{1997}]{helmbold1997pruning}
Helmbold, D.~P. and R.~E. Schapire (1997).
\newblock Predicting nearly as well as the best pruning of a decision tree.
\newblock {\em Machine Learning\/}~{\em 27\/}(1), 51--68.

\bibitem[\protect\citeauthoryear{Ho}{Ho}{1995}]{ho1995random}
Ho, T.~K. (1995).
\newblock Random decision forests.
\newblock In {\em Proceedings of 3rd international conference on document
  analysis and recognition}, Volume~1, pp.\  278--282. IEEE.

\bibitem[\protect\citeauthoryear{Jeong, Nam, and Ko}{Jeong
  et~al.}{2020}]{9046770}
Jeong, M., J.~Nam, and B.~C. Ko (2020).
\newblock Lightweight multilayer random forests for monitoring driver emotional
  status.
\newblock {\em IEEE Access\/}~{\em 8}, 60344--60354.

\bibitem[\protect\citeauthoryear{Jeong, Park, and Ko}{Jeong
  et~al.}{2019}]{8972136}
Jeong, M., M.~Park, and B.~C. Ko (2019).
\newblock Intelligent driver emotion monitoring based on lightweight multilayer
  random forests.
\newblock In {\em 2019 IEEE 17th International Conference on Industrial
  Informatics (INDIN)}, Volume~1, pp.\  280--283.

\bibitem[\protect\citeauthoryear{{Joblib Development Team}}{{Joblib Development
  Team}}{2020}]{joblib}
{Joblib Development Team} (2020).
\newblock Joblib: running python functions as pipeline jobs.

\bibitem[\protect\citeauthoryear{Johansson, S{\"o}nstr{\"o}d, and
  L{\"o}fstr{\"o}m}{Johansson et~al.}{2011}]{johansson2011one}
Johansson, U., C.~S{\"o}nstr{\"o}d, and T.~L{\"o}fstr{\"o}m (2011).
\newblock One tree to explain them all.
\newblock In {\em 2011 IEEE Congress of Evolutionary Computation (CEC)}, pp.\
  1444--1451. IEEE.

\bibitem[\protect\citeauthoryear{Kazemitabar, Amini, Bloniarz, and
  Talwalkar}{Kazemitabar et~al.}{2017}]{kazemitabar2017variable}
Kazemitabar, S.~J., A.~A. Amini, A.~Bloniarz, and A.~Talwalkar (2017).
\newblock Variable importance using decision trees.
\newblock In {\em Proceedings of the 31st International Conference on Neural
  Information Processing Systems}, pp.\  425--434.

\bibitem[\protect\citeauthoryear{Ke, Meng, Finley, Wang, Chen, Ma, Ye, and
  Liu}{Ke et~al.}{2017}]{lightgbm_paper}
Ke, G., Q.~Meng, T.~Finley, T.~Wang, W.~Chen, W.~Ma, Q.~Ye, and T.-Y. Liu
  (2017).
\newblock Lightgbm: A highly efficient gradient boosting decision tree.
\newblock In I.~Guyon, U.~V. Luxburg, S.~Bengio, H.~Wallach, R.~Fergus,
  S.~Vishwanathan, and R.~Garnett (Eds.), {\em Advances in Neural Information
  Processing Systems}, Volume~30. Curran Associates, Inc.

\bibitem[\protect\citeauthoryear{Kim, Kwak, and Ko}{Kim et~al.}{2019}]{8610075}
Kim, S., S.~Kwak, and B.~C. Ko (2019).
\newblock Fast pedestrian detection in surveillance video based on soft target
  training of shallow random forest.
\newblock {\em IEEE Access\/}~{\em 7}, 12415--12426.

\bibitem[\protect\citeauthoryear{Lakshminarayanan, Roy, and
  Teh}{Lakshminarayanan et~al.}{2014}]{lakshminarayanan2014mondrianforests}
Lakshminarayanan, B., D.~M. Roy, and Y.~W. Teh (2014).
\newblock {M}ondrian forests: Efficient online random forests.
\newblock In {\em Advances in Neural Information Processing Systems 27}, pp.\
  3140--3148. Curran Associates, Inc.

\bibitem[\protect\citeauthoryear{Lam, Pitrou, and Seibert}{Lam
  et~al.}{2015}]{numba}
Lam, S.~K., A.~Pitrou, and S.~Seibert (2015).
\newblock Numba: A llvm-based python jit compiler.
\newblock In {\em Proceedings of the Second Workshop on the LLVM Compiler
  Infrastructure in HPC}, LLVM '15, New York, NY, USA. Association for
  Computing Machinery.

\bibitem[\protect\citeauthoryear{Liu, Ting, and Zhou}{Liu
  et~al.}{2008}]{4781136}
Liu, F.~T., K.~M. Ting, and Z.-H. Zhou (2008).
\newblock Isolation forest.
\newblock In {\em 2008 Eighth IEEE International Conference on Data Mining},
  pp.\  413--422.

\bibitem[\protect\citeauthoryear{Louppe}{Louppe}{2014}]{louppe2014understanding}
Louppe, G. (2014).
\newblock {\em Understanding random forests: From theory to practice}.
\newblock Ph.\ D. thesis, University of Liege.

\bibitem[\protect\citeauthoryear{Louppe, Wehenkel, Sutera, and Geurts}{Louppe
  et~al.}{2013}]{louppe2013understanding}
Louppe, G., L.~Wehenkel, A.~Sutera, and P.~Geurts (2013).
\newblock Understanding variable importances in forests of randomized trees.
\newblock In C.~Burges, L.~Bottou, M.~Welling, Z.~Ghahramani, and K.~Weinberger
  (Eds.), {\em Advances in Neural Information Processing Systems}, Volume~26,
  pp.\  431--439. Curran Associates, Inc.

\bibitem[\protect\citeauthoryear{McAllester}{McAllester}{1998}]{mcallester1998pacbayes}
McAllester, D.~A. (1998).
\newblock Some pac-bayesian theorems.
\newblock In {\em Proceedings of the 11th Annual conference on Computational
  Learning Theory {(COLT)}}, pp.\  230--234. ACM.

\bibitem[\protect\citeauthoryear{McAllester}{McAllester}{1999}]{mcallester1999pacbayes}
McAllester, D.~A. (1999).
\newblock Pac-bayesian model averaging.
\newblock In {\em Proceedings of the 20th Annual Conference on Computational
  Learning Theory {(COLT)}}, pp.\  164--170. ACM.

\bibitem[\protect\citeauthoryear{Meinshausen}{Meinshausen}{2006}]{meinshausen2006quantile}
Meinshausen, N. (2006).
\newblock Quantile regression forests.
\newblock {\em Journal of Machine Learning Research\/}~{\em 7\/}(35), 983--999.

\bibitem[\protect\citeauthoryear{Mentch and Zhou}{Mentch and
  Zhou}{2020}]{mentch2020randomization}
Mentch, L. and S.~Zhou (2020).
\newblock Randomization as regularization: A degrees of freedom explanation for
  random forest success.

\bibitem[\protect\citeauthoryear{Messenger and Mandell}{Messenger and
  Mandell}{1972}]{messenger1972modal}
Messenger, R. and L.~Mandell (1972).
\newblock A modal search technique for predictive nominal scale multivariate
  analysis.
\newblock {\em Journal of the American statistical association\/}~{\em
  67\/}(340), 768--772.

\bibitem[\protect\citeauthoryear{Morgan and Sonquist}{Morgan and
  Sonquist}{1963}]{morgan1963problems}
Morgan, J.~N. and J.~A. Sonquist (1963).
\newblock Problems in the analysis of survey data, and a proposal.
\newblock {\em Journal of the American statistical association\/}~{\em
  58\/}(302), 415--434.

\bibitem[\protect\citeauthoryear{Mourtada, Ga{\"\i}ffas, and Scornet}{Mourtada
  et~al.}{2021}]{mourtada2019amf}
Mourtada, J., S.~Ga{\"\i}ffas, and E.~Scornet (2021).
\newblock {AMF: Aggregated Mondrian forests for online learning}.
\newblock {\em Journal of the Royal Statistical Society Series B: Statistical
  Methodology\/}~{\em 83\/}(3), 505--533.

\bibitem[\protect\citeauthoryear{Mourtada, Ga{\"\i}ffas, Scornet,
  et~al.}{Mourtada et~al.}{2020}]{mourtada2020minimax}
Mourtada, J., S.~Ga{\"\i}ffas, E.~Scornet, et~al. (2020).
\newblock Minimax optimal rates for mondrian trees and forests.
\newblock {\em Annals of Statistics\/}~{\em 48\/}(4), 2253--2276.

\bibitem[\protect\citeauthoryear{Oza and Russell}{Oza and
  Russell}{2001}]{oza2001onlinebagging}
Oza, N.~C. and S.~Russell (2001).
\newblock Online bagging and boosting.
\newblock In {\em Proceedings of the 8th International Conference on Artificial
  Intelligence and Statistics ({AISTATS})}.

\bibitem[\protect\citeauthoryear{Pargent, Pfisterer, Thomas, and
  Bischl}{Pargent et~al.}{2021}]{pargent2021regularized}
Pargent, F., F.~Pfisterer, J.~Thomas, and B.~Bischl (2021).
\newblock Regularized target encoding outperforms traditional methods in
  supervised machine learning with high cardinality features.

\bibitem[\protect\citeauthoryear{Patil and Singh}{Patil and
  Singh}{2014}]{6968348}
Patil, A. and S.~Singh (2014).
\newblock Differential private random forest.
\newblock In {\em 2014 International Conference on Advances in Computing,
  Communications and Informatics (ICACCI)}, pp.\  2623--2630.

\bibitem[\protect\citeauthoryear{Pedregosa, Varoquaux, Gramfort, Michel,
  Thirion, Grisel, Blondel, Prettenhofer, Weiss, Dubourg, Vanderplas, Passos,
  Cournapeau, Brucher, Perrot, and Duchesnay}{Pedregosa
  et~al.}{2011}]{pedregosa2011scikit-learn}
Pedregosa, F., G.~Varoquaux, A.~Gramfort, V.~Michel, B.~Thirion, O.~Grisel,
  M.~Blondel, P.~Prettenhofer, R.~Weiss, V.~Dubourg, J.~Vanderplas, A.~Passos,
  D.~Cournapeau, M.~Brucher, M.~Perrot, and {\'E}.~Duchesnay (2011).
\newblock Scikit-learn: Machine learning in {P}ython.
\newblock {\em Journal of Machine Learning Research\/}~{\em 12}, 2825--2830.

\bibitem[\protect\citeauthoryear{Probst, Wright, and Boulesteix}{Probst
  et~al.}{2019}]{probst2019hyperparameters}
Probst, P., M.~N. Wright, and A.-L. Boulesteix (2019).
\newblock Hyperparameters and tuning strategies for random forest.
\newblock {\em Wiley Interdisciplinary Reviews: Data Mining and Knowledge
  Discovery\/}~{\em 9\/}(3), e1301.

\bibitem[\protect\citeauthoryear{Prokhorenkova, Gusev, Vorobev, Dorogush, and
  Gulin}{Prokhorenkova et~al.}{2018}]{prokhorenkova2017catboost}
Prokhorenkova, L., G.~Gusev, A.~Vorobev, A.~V. Dorogush, and A.~Gulin (2018).
\newblock {CatBoost: unbiased boosting with categorical features}.
\newblock In S.~Bengio, H.~Wallach, H.~Larochelle, K.~Grauman, N.~Cesa-Bianchi,
  and R.~Garnett (Eds.), {\em Advances in Neural Information Processing
  Systems}, Volume~31, pp.\  6638--6648. Curran Associates, Inc.

\bibitem[\protect\citeauthoryear{Quinlan}{Quinlan}{1986}]{quinlan1986induction}
Quinlan, J.~R. (1986).
\newblock Induction of decision trees.
\newblock {\em Machine learning\/}~{\em 1\/}(1), 81--106.

\bibitem[\protect\citeauthoryear{Ren, Aubert-Kato, Anzai, Ohta, and
  Tripette}{Ren et~al.}{2020}]{ren2020random}
Ren, D., N.~Aubert-Kato, E.~Anzai, Y.~Ohta, and J.~Tripette (2020).
\newblock Random forest algorithms for recognizing daily life activities using
  plantar pressure information: a smart-shoe study.
\newblock {\em PeerJ\/}~{\em 8}, e10170.

\bibitem[\protect\citeauthoryear{Roy and Larocque}{Roy and
  Larocque}{2020}]{doi:10.1177/0962280219829885}
Roy, M.-H. and D.~Larocque (2020).
\newblock Prediction intervals with random forests.
\newblock {\em Statistical Methods in Medical Research\/}~{\em 29\/}(1),
  205--229.
\newblock PMID: 30786820.

\bibitem[\protect\citeauthoryear{Scornet}{Scornet}{2016a}]{scornet2016asymptotics_rf}
Scornet, E. (2016a).
\newblock On the asymptotics of random forests.
\newblock {\em Journal of Multivariate Analysis\/}~{\em 146}, 72--83.

\bibitem[\protect\citeauthoryear{Scornet}{Scornet}{2016b}]{scornet2016kernel_rf}
Scornet, E. (2016b).
\newblock Random forests and kernel methods.
\newblock {\em IEEE Transactions on Information Theory\/}~{\em 62}, 1485--1500.

\bibitem[\protect\citeauthoryear{Scornet, Biau, and Vert}{Scornet
  et~al.}{2015}]{scornet2015consistency_rf}
Scornet, E., G.~Biau, and J.-P. Vert (2015, 08).
\newblock Consistency of random forests.
\newblock {\em The Annals of Statistics\/}~{\em 43\/}(4), 1716--1741.

\bibitem[\protect\citeauthoryear{She, Wang, Ayer, Toumi, and Chhatwal}{She
  et~al.}{2020}]{2020arXiv201101219S}
She, Z., Z.~Wang, T.~Ayer, A.~Toumi, and J.~Chhatwal (2020, October).
\newblock {Estimating County-Level COVID-19 Exponential Growth Rates Using
  Generalized Random Forests}.
\newblock Papers 2011.01219, arXiv.org.

\bibitem[\protect\citeauthoryear{Subasi, Alickovic, and Kevric}{Subasi
  et~al.}{2017}]{subasi2017diagnosis}
Subasi, A., E.~Alickovic, and J.~Kevric (2017).
\newblock Diagnosis of chronic kidney disease by using random forest.
\newblock In {\em CMBEBIH 2017}, pp.\  589--594. Springer.

\bibitem[\protect\citeauthoryear{Taddy, Gramacy, and Polson}{Taddy
  et~al.}{2011}]{taddy2011dynamictrees}
Taddy, M.~A., R.~B. Gramacy, and N.~G. Polson (2011).
\newblock Dynamic trees for learning and design.
\newblock {\em Journal of the American Statistical Association\/}~{\em
  106\/}(493), 109--123.

\bibitem[\protect\citeauthoryear{Tavakoli, Kumar, Boukhechba, and
  Heydarian}{Tavakoli et~al.}{2021}]{tavakoli2021driver}
Tavakoli, A., S.~Kumar, M.~Boukhechba, and A.~Heydarian (2021).
\newblock Driver state and behavior detection through smart wearables.

\bibitem[\protect\citeauthoryear{Tjalkens, Shtarkov, and Willems}{Tjalkens
  et~al.}{1993}]{tjalkens1993sequential}
Tjalkens, T.~J., Y.~M. Shtarkov, and F.~M.~J. Willems (1993).
\newblock Sequential weighting algorithms for multi-alphabet sources.
\newblock In {\em 6th Joint Swedish-Russian International Workshop on
  Information Theory}, pp.\  230--234.

\bibitem[\protect\citeauthoryear{Tsybakov}{Tsybakov}{2003}]{tsybakov2003optimal}
Tsybakov, A.~B. (2003).
\newblock Optimal rates of aggregation.
\newblock In {\em Learning theory and kernel machines}, pp.\  303--313.
  Springer.

\bibitem[\protect\citeauthoryear{Vovk}{Vovk}{1998}]{vovk1998mixability}
Vovk, V. (1998).
\newblock A game of prediction with expert advice.
\newblock {\em Journal of Computer and System Sciences\/}~{\em 56\/}(2),
  153--173.

\bibitem[\protect\citeauthoryear{Wager and Athey}{Wager and
  Athey}{2018}]{wager2018estimation}
Wager, S. and S.~Athey (2018).
\newblock Estimation and inference of heterogeneous treatment effects using
  random forests.
\newblock {\em Journal of the American Statistical Association\/}~{\em
  113\/}(523), 1228--1242.

\bibitem[\protect\citeauthoryear{Willems}{Willems}{1998}]{willems1998context-extensions}
Willems, F. M.~J. (1998, Mar).
\newblock The context-tree weighting method: Extensions.
\newblock {\em IEEE Transactions on Information Theory\/}~{\em 44\/}(2),
  792--798.

\bibitem[\protect\citeauthoryear{Willems, Shtarkov, and Tjalkens}{Willems
  et~al.}{1995}]{willems1995context-basic}
Willems, F. M.~J., Y.~M. Shtarkov, and T.~J. Tjalkens (1995, May).
\newblock The context-tree weighting method: Basic properties.
\newblock {\em IEEE Transactions on Information Theory\/}~{\em 41\/}(3),
  653--664.

\bibitem[\protect\citeauthoryear{Yazici, Basurra, and Gaber}{Yazici
  et~al.}{2018}]{yazici2018edge}
Yazici, M.~T., S.~Basurra, and M.~M. Gaber (2018).
\newblock Edge machine learning: Enabling smart internet of things
  applications.
\newblock {\em Big data and cognitive computing\/}~{\em 2\/}(3), 26.

\bibitem[\protect\citeauthoryear{Zhang, Zimmerman, Nettleton, and
  Nordman}{Zhang et~al.}{2020}]{doi:10.1080/00031305.2019.1585288}
Zhang, H., J.~Zimmerman, D.~Nettleton, and D.~J. Nordman (2020).
\newblock Random forest prediction intervals.
\newblock {\em The American Statistician\/}~{\em 74\/}(4), 392--406.

\bibitem[\protect\citeauthoryear{Zhou and Mentch}{Zhou and
  Mentch}{2021}]{zhou2021trees}
Zhou, S. and L.~K. Mentch (2021).
\newblock Trees, forests, chickens, and eggs: when and why to prune trees in a
  random forest.
\newblock {\em Statistical Analysis and Data Mining: The ASA Data Science
  Journal\/}~{\em 16}, 45--64.

\bibitem[\protect\citeauthoryear{Zhou, Zhou, and Hooker}{Zhou
  et~al.}{2018}]{zhou2018approximation}
Zhou, Y., Z.~Zhou, and G.~Hooker (2018).
\newblock Approximation trees: Statistical stability in model distillation.
\newblock {\em arXiv preprint arXiv:1808.07573\/}.

\end{thebibliography}
\end{document}